\newtheorem{definition}{Definition}
\newtheorem{criterion}{Criterion}
\newcolumntype{M}[1]{D{.}{.}{1.#1}}
\pgfplotsset{width=0.2 \textwidth,compat=1.8}
\def\eg{e.g.,~}                
\newlength\paramargin
\newlength\figmargin
\newlength\secmargin
\newcolumntype{?}{!{\vrule width 1.25pt}}
\long\def\ignorethis#1{}
\DeclareMathOperator\erf{erf}
\renewcommand\footnotetextcopyrightpermission[1]{}
\begin{document}
	\title{On Discrimination Discovery and Removal \\ in Ranked Data using Causal Graph}
	
	\author{Yongkai Wu}
	\orcid{1234-5678-9012}
	\affiliation{%
		\institution{University of Arkansas}
	}
	\email{yw009@uark.edu}
	
	\author{Lu Zhang}
	\affiliation{%
		\institution{University of Arkansas}
	}
	\email{lz006@uark.edu}
	
	\author{Xintao Wu}
	\affiliation{%
		\institution{University of Arkansas}
	}
	\email{xintaowu@uark.edu}

	\begin{abstract}
		Predictive models learned from historical data are widely used to help companies and organizations make decisions. However, they may digitally unfairly treat unwanted groups, raising concerns about fairness and discrimination. In this paper, we study the fairness-aware ranking problem which aims to discover discrimination in ranked datasets and reconstruct the fair ranking. Existing methods in fairness-aware ranking are mainly based on statistical parity that cannot measure the true discriminatory effect since discrimination is causal. On the other hand, existing methods in causal-based anti-discrimination learning focus on classification problems and cannot be directly applied to handle the ranked data. To address these limitations, we propose to map the rank position to a continuous score variable that represents the qualification of the candidates. Then, we build a causal graph that consists of both the discrete profile attributes and the continuous score. The path-specific effect technique is extended to the mixed-variable causal graph to identify both direct and indirect discrimination. The relationship between the path-specific effects for the ranked data and those for the binary decision is theoretically analyzed. Finally, algorithms for discovering and removing discrimination from a ranked dataset are developed. Experiments using the real dataset show the effectiveness of our approaches.
	\end{abstract}

 	\keywords{Discrimination-aware machine learning, fair ranking, causal graph, direct and indirect discrimination}

	\maketitle
	
	\section{Introduction}

	Discrimination-aware machine learning which aims to construct discrimination-free machine learning models has been an active research area in the recent years. Many works have been conducted to achieve this goal by detecting and removing discrimination/biases from the historical training data \cite{kamiran2009classifying,feldman2015certifying,Zhang2017c,calmon2017optimized,nabi2017fair}, or from the constructed machine learning models \cite{calders2010three,kamishima2011fairness,kamishima2012fairness,zafar2017fairness}. However, most works focus on the classification models built for categorical decisions, especially binary decisions.
	In this paper, we investigate discrimination in ranking models, which are another widely used machine learning models adopted by search engines, recommendation systems, and auction systems, etc. To be more specific, we study the discrimination discovery and removal from the ranked data. A ranked dataset is a combination of the candidate profiles with the permutation of the candidates as the decision. Fairness concerns are raised for the ranking models since biases and discrimination can also be introduced into the ranking.
	
	\setlength{\tabcolsep}{3pt}
	\begin{table}[ht]\small
	\caption{A toy example of ranked data.}
	\centering
	\begin{tabular}{|c|c|c|c|c|c|c|c|c|c|c|}
		\hline
		   ID     & u1 & u2 & u3 & u4 & u5 & u6 & u7 & u8 & u9 & u10 \\ \hline
		  Race    & 1  & 1  & 1  & 1  & 1  & 0  & 0  & 0  & 0  & 0  \\ \hline
		Zip Code  & 1  & 1  & 1  & 1  & 1  & 1  & 0  & 0  & 1  & 0  \\ \hline
		Interview & 1  & 2  & 2  & 4  & 2  & 5  & 4  & 4  & 3  & 2  \\ \hline
		   Edu    & 1  & 2  & 1  & 2  & 4  & 5  & 4  & 5  & 3  & 5  \\ \hline
	\end{tabular}
	\label{tab:example}
\end{table}

\begin{figure}[ht]
	\centering
	\includegraphics[width=2.5in]{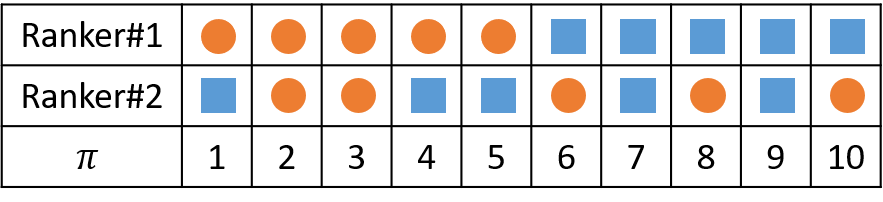}
	\caption{Ranking results produced by two rankers. Blue squares represent the favorable group and red circles represent unfavorable group.}
	\label{fig:misleading}
\end{figure}

	Existing methods \cite{Zehlike,Yang2017} for studying the discrimination discovery and removal from ranked data are mainly based on statistical parity, which means that the demographics of individuals in any prefix of the ranking are identical to the demographics of the whole population. However, it has already been shown in classification that statistical parity does not take into account the fact that part of discrimination is explainable by some non-protected attributes and hence cannot accurately measure discrimination \cite{Dwork2012}. We believe that this observation also holds in the ranked data. Let's consider an illustrative example of ranked data for a company recruiting system shown in Table~\ref{tab:example}.
	The ranked data contains four profile attributes: \textit{race} ($C$), \textit{zip code} ($Z$), \textit{education} ($E$), \textit{interview result} ($I$), where \textit{race} is the protected attribute with a favorable group ($C=1$) and an unfavorable group ($C=0$), and \textit{education} and \textit{interview result} are the objective requirements of getting the job. Assume that there are two rankers, both of which compute the qualification scores to produce the rankings shown in Figure \ref{fig:misleading}. 
	The first ranker, denoted by Ranker\#1, produces qualification scores as an equal-weighted linear combination of two attributes \textit{education} ($E$) and \textit{interview result} ($I$). Intuitively, Ranker\#1 produces a fair ranking since it purely depends on two objective attributes. However, as can be seen, the ranking results do not satisfy statistical parity. On the basis of Ranker\#1, the second ranker, Ranker\#2, further gives a bonus score of $2$ for the favorable group (i.e., $C=1$). The usage of the protected attribute explicitly results in the unfavorable treatment to the protected group ($C = 0$). Nevertheless, the ranking results satisfy statistical parity as two race groups are well-mixed in equal proportion. This example shows that statistical parity may produce misleading conclusions regarding discrimination.

To address the limitation of the statistical parity-based methods, the causal graph-based discrimination detection and removal methods have been recently proposed by Zhang et al. \cite{Zhang2017c}. It shows that the correlation between the protected attribute and the decision is a nonlinear combination of the direct discrimination, the indirect discrimination, as well as the explainable effect. The path-specific effect technique has been used to capture the causal effects passing through different paths. However, this work focuses on binary classification. In ranking systems, the decisions are given in term of a permutation of a series of unique, concatenating integers which cannot be treated as regular random variables. Thus, the methods in \cite{Zhang2017c} cannot be applied directly to deal with ranked data.

In this paper, we employ the causal graph to solve the fair ranking problem by adopting a continuous variable called \emph{score} instead of the ranking positions to represent the qualifications of individuals in the rank. We use the Bradley-Terry model \cite{Bradley1952} to obtain a reasonable mapping from ranking positions to scores. We then construct the causal graph from the individuals' profiles and scores, a mix of categorical and continuous data. One challenge here is that traditional causal graph construction and inference are limited to the single-datatype situations where the variables are all discrete (e.g., the causal Bayesian network) or all continuous (e.g., the linear Gaussian model). To address this challenge, we associate the score with a set of Conditional Gaussian (CG) distributions instead of the Conditional Probability Table (CPT). Then, we extend the path-specific effect technique to our mixed-variable causal graph for capturing direct and indirect discrimination. We also derive a relationship between the path-specific effects for the ranked data and those for the binary decision, assuming that binary decision is obtained based on certain cut-off point imposed on the ranking. Algorithms for detecting discrimination in the causal graph, as well as for removing rank biases from the data are developed. Finally, we conduct experiments using real datasets to show the effectiveness of our methods. The results show that our methods can correctly detect and remove both direct and indirect discrimination with relatively small data utility loss, while the statistical parity based methods neither correctly identify discrimination nor successfully mitigate discrimination.

The rest of this paper is organized as follows. Section~\ref{sec:preliminaries} gives the notations and backgrounds that will be used in the paper. Section~\ref{sec:modeling} models direct and indirect discrimination in a ranked data as causal effects. Sections~\ref{sec:algorithm} develops the discrimination discovery and removal approaches based on the causal graph. Experiment settings and results are reported in Section~\ref{sec:experiment}. Finally, the related work and the conclusions are summarized in Sections~\ref{sec:relatedwork} and Section~\ref{sec:conclusions}.

	\section{Preliminaries}
	\label{sec:preliminaries}

	Throughout the paper, attributes are denoted by an uppercase letter, e.g.: $X$; sets of attributes are denoted by a bold uppercase letter, e.g.: $\mathbf{X}$. A value of a certain attribute is denoted by a lowercase letter, e.g.: $x$; a set of values of a set attributes is denoted by a bold lowercase letter, e.g.: $\mathbf{x}$. The domain space of an attribute is denoted by $\mathfrak{X}_X$.
	The domain space of an attribute set is a Cartesian product of the domain spaces of its elements, denoted by $\mathfrak{X}_\mathbf{X} = \prod_{ X \in \mathbf{X}} \mathfrak{X}_X$.

	A causal graph \cite{Pearl2009a} is a DAG $\mathcal{G}=(\mathbf{V},\mathbf{A})$ where $\mathbf{V}$ is a set of nodes and $\mathbf{A}$ is a set of edges. Each node in the graph represents an attribute. Each edge, denoted by an arrow ``$\rightarrow$'' pointing from the cause to the effect, represents the direct causal relationship. The path that traces arrows directed from one node $X$ to another node $Y$ is called the causal path from $X$ to $Y$.
	For any node $X$, its parents are denoted by $Pa(X)$, and its children are denoted by $Ch(X)$. Usually, the \emph{local Markov condition} is assumed to be satisfied, which means that each node is independent of its non-descendants conditional on all its parents. Each child-parent family in the graph is associated with a deterministic function
	\begin{equation*}
	x = f_{X}(Pa(X),\varepsilon_{X}), \quad X\in \mathbf{V},
	\end{equation*}
	where $\varepsilon_{X}$ is an arbitrarily distributed random disturbance. This functional characterization of the child-parent relationship leads to the conditional probability distribution that characterizes the graph, i.e., $P(x|Pa(X))$. When all variables are discrete, $P(x|Pa(X))$ is denoted by a conditional probability table (CPT).

	Inferring causal effects in the causal graph is performed through interventions, which simulate the physical interventions that fix the values of some attributes to certain constants. Specifically, an intervention that fixes the values a set of variables $\mathbf{X}\subseteq \mathbf{V}$ to constants $\mathbf{x}$ is mathematically formalized as $do(\mathbf{X}=\mathbf{x})$ or simply $do(\mathbf{x})$. The post-intervention distribution of all other attributes $\mathbf{Y}=\mathbf{V}\backslash \mathbf{X}$, denoted by $P(\mathbf{Y}=\mathbf{y}|do(\mathbf{X}=\mathbf{x}))$ or simply $P(\mathbf{y}|do(\mathbf{x}))$, can be calculated using the truncated factorization formula \cite{Koller2009}
	\begin{equation}\label{eq:do1}
	P(\mathbf{y}|do(\mathbf{x})) = \prod_{Y\in \mathbf{Y}}P(y|Pa(Y))\delta_{\mathbf{X}=\mathbf{x}},
	\end{equation}
	where $\delta_{\mathbf{X}=\mathbf{x}}$ means assigning attributes in $\mathbf{X}$ involved in the term ahead with the corresponding values in $\mathbf{x}$. As a result, the \emph{total causal effect} of $\mathbf{X}$ on $\mathbf{Y}$ is assessed by comparing the difference between the post-intervention distributions under two different interventions $do(\mathbf{x}')$ and $do(\mathbf{x}'')$. A common measure of the total causal effect is the expected difference as shown in Definition \ref{def:te}. Note that the total causal effect measures the effect of the intervention that is transmitted along all causal paths from $\mathbf{X}$ to $\mathbf{Y}$.
	
	\begin{definition}[Total causal effect]\label{def:te}
		Given a causal graph $\mathcal{G}=(\mathbf{V},\mathbf{A})$ and two disjoint sets of variables $\mathbf{X},\mathbf{Y}\subseteq \mathbf{V}$, the total causal effect of $\mathbf{X}$ on $\mathbf{Y}$ in terms of two interventions $do(\mathbf{x}')$ and $do(\mathbf{x}'')$, denoted by $\mathit{TE}(\mathbf{x}', \mathbf{x}'')$, is given by
		\begin{equation*}
		\mathit{TE}(\mathbf{x}', \mathbf{x}'') = \mathbb{E}\left[ \mathbf{Y}|do(\mathbf{x}') \right] - \mathbb{E}\left[ \mathbf{Y}|do(\mathbf{x}'') \right],
		\end{equation*}
		where $\mathbb{E}[\cdot]$ is the expectation.
	\end{definition}
	
	As an extension to the total causal effect, Avin et al. \cite{Avin2005} proposed the \emph{path-specific effect} that measures the causal effect where the intervention's effect is transmitted along a subset of the causal paths from $\mathbf{X}$ to $\mathbf{Y}$. Denote a subset of causal paths by $\pi$, and denote by $P(\mathbf{Y}|do(\mathbf{x}'|_{\pi}))$ the post-intervention distribution of $\mathbf{Y}$ with the intervention's effect transmitted along $\pi$.
	Based on that, the $\pi$-specific effect is given by Definition \ref{def:pse}.
	\begin{definition}[Path-specific effect]\label{def:pse}
		Given a causal graph $\mathcal{G}=(\mathbf{V},\mathbf{A})$, two disjoint sets of variables $\mathbf{X},\mathbf{Y}\subseteq \mathbf{V}$, and a subset of causal paths $\pi$, the $\pi$-specific effect of $\mathbf{X}$ on $\mathbf{Y}$ in terms of two interventions $do(\mathbf{x}')$ and $do(\mathbf{x}'')$, denoted by $\mathit{SE}_{\pi}(\mathbf{x}', \mathbf{x}'')$, is given by
		\begin{equation*}
		\mathit{SE}_{\pi}(\mathbf{x}', \mathbf{x}'') = \mathbb{E}\left[ \mathbf{Y}|do(\mathbf{x}'|_{\pi}) \right] - \mathbb{E}\left[ \mathbf{Y}|do(\mathbf{x}'') \right].
		\end{equation*}
	\end{definition}
	
	In \cite{Avin2005}, it is pointed out the condition under which the path-specific effect can be estimated from the observed data, known as identifiability of the path-specific effects. How to deal with the unidentifiable situation is discussed in \cite{Zhang2018a}. In \cite{Shpitser2013}, Shpitser et al. gave the method for calculating the identifiable path-specific effect $P(\mathbf{Y}|do(\mathbf{x}'|_{\pi}))$. These strategies are readily to be applied to our methods. 

	\section{Modeling Direct and Indirect Discrimination in Ranked Data}
	\label{sec:modeling}

    In this section, we study how to model direct and indirect discrimination in a ranked dataset as the causal effect.
		    We consider a ranked dataset $\mathcal{D}$ consisting of $N$ individuals with a protected attribute $C$, several non-protected attributes $\mathbf{Z}=\{Z_{1},\cdots,Z_{j},\cdots\}$, and a rank permutation $\pi$ as the decision. There is a subset of attributes $\mathbf{R}\subseteq \mathbf{Z}$ that may cause indirect discrimination, referred to as the \emph{redlining attributes}.	
				We assume all attributes are categorical.
		We further make two reasonable assumptions: 1) the protected attribute $C$ has no parent; and 2) the score $S$ has no child. The first one is due to the fact that the protected attribute is usually an inherent nature of human beings, and the second one is because the score is the output of a ranking system.

	\subsection{Building Causal Graph for Ranked Data}\label{sec:modeling:building}
	A rank permutation is a series of unique, concatenating integers that cannot be treated as normal categorical random variables.
	We use the Bradley-Terry model to map the ranking positions in a ranked data to the continuous scores. A Bradley-Terry model $\mathcal{M}$ assigns each individual $i$ a score $s_{i}$ ($s_{i}\in \mathbb{R}$) to indicate the qualification preference of individual. Generally, a larger score represents a better qualification. The difference between the scores of two individuals $i,j$ corresponds to the log-odds of the probability $p_{i,j}$ that individual $i$ is ranked before individual $j$ in the rank, i.e.,
	\begin{equation*}
	    s_i - s_j = \log \frac{p_{ij}}{ 1 - p_{ij} }.
	\end{equation*}
    Equivalently, solving for $p_{ij}$ yields
    \begin{equation*}
        p_{ij} = \frac{e^{s_i}}{ e^{s_i} + e^{s_j} }.
    \end{equation*}
	On the other hand, the probability of any rank permutation $\pi$ given a Bradley-Terry model $\mathcal{M}$ is proportional to the product of the probability $p_{i,j}$ of all preference pairs subject to $\pi$, i.e.,
	\begin{equation*}
	P(\pi | \mathcal{M}) \propto \prod_{(i,j): \pi_i <\pi_j} {p_{ij}}.
	\end{equation*}
	Thus, the logarithm likelihood of the Bradley-Terry model $\mathcal{M}$ given the observed rank permutation $\pi$ is given by $\mathcal{L}(\mathcal{M}|\pi) = -\log{P(\pi | \mathcal{M})}$. As a result, the optimal Bradley-Terry model that best fits the observed rank permutation $\pi$ can be obtained by minimizing $\mathcal{L}(\mathcal{M}|\pi)$ as the loss function. Wu et al. \cite{Wua} proved that the loss function is convex and could be efficiently optimized with gradient descent.

	After obtaining the score $S$ using the Bradley-Terry model, we build a causal graph for variables $C$, $\mathbf{Z}$ and $S$.
	We first adopt the PC-algorithm for learning the structure of the causal graph. Since there exist both discrete and continuous variables, different conditional independence testing methods can be adopted, such as chi-square test for discrete variables, partial correlation matrix for continuous variables, and conditional Gaussian likelihood ratio test for mixed variables.	
	Then, for parameterizing the causal graph, we treat discrete and continuous variables in different ways. For discrete variables $C$ and $\mathbf{Z}$ (we can extend our method to the situation where some profile attributes are continuous), each of them is associated with a Conditional Probability Table (CPT). The conditional probabilities can be estimated from data using standard statistical estimation techniques (like the maximum likelihood estimation). For continuous score $S$, it is associated with the Conditional Gaussian (CG) distributions instead of the CPT. Let $\mathbf{Q} = Pa(S)\backslash \{C\}$. For each value assignment $c,\mathbf{q}$ of parents of $S$, there is a CG distribution whose mean and variance are based on $c,\mathbf{q}$. Thus, the CG distribution of $S$ is given by
	\begin{equation*}
	P(s|c,\mathbf{q}) = \mathcal{N}(\mu_{c,\mathbf{q}}, \sigma_{c,\mathbf{q}}^{2}).
	\end{equation*}
	Finally, we fit each CG distribution $\mathcal{N}(\mu_{c,\mathbf{q}}, \sigma_{c,\mathbf{q}}^{2})$ to the scores of all candidates with $C=c$ and $\mathbf{Q}=\mathbf{q}$ using standard statistical estimation techniques.

	As an example, Figure \ref{fig:toy} shows a causal graph of the toy example presented in the Introduction. Each of $C,Z,E,I$ is associated with a CPT representing the conditional probability given the parents, and $S$ is associated with a set of CG distribution where the mean and the variance are based on its parents, the other four variables.

	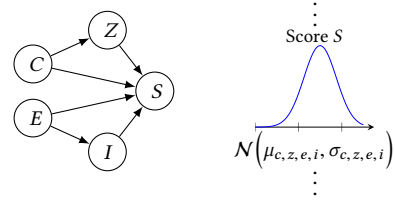
\begin{figure}[ht]
		\hspace*{0cm}
		\centering
		\begin{tikzpicture}[node distance=1.0cm and 1.0cm, mynode/.style={draw,circle,text width=0.1cm,align=center},scale=0.6]
		\node[mynode] at (0,0.4*1.5) (r) {\small $C$};
		\node[mynode] at (0,-0.4*1.5) (e) {\small $E$};
		\node[mynode] at (1.1*1.5,0.9*1.5) (z) {\small $Z$};
		\node[mynode] at (1.1*1.5,-0.9*1.5) (i) {\small $I$};
		\node[mynode] at (1.8*1.5,0) (s) {\small $S$};
		\path 	(r) edge[-latex] (z)
		(r) edge[-latex] (s)
		(z) edge[-latex] (s)
		(e) edge[-latex] (i)
		(e) edge[-latex] (s)
		(i) edge[-latex] (s);
		\node[right=0.6cm of s] 
		{
			\begin{tikzpicture}[scale=0.8]
			    \node at (0.8,2) {\vdots};
				\begin{axis}[every axis plot post/.append style={mark=none,domain=-2:3,samples=50,smooth},
						axis x line=bottom, 
						axis y line=none, 
						xtick=,
						xticklabels={,,},
						title = {Score $S$},
						title style = {yshift=-3ex,},
						xlabel={\large $\mathcal{N} \Big( \mu_{c,z,e,i}, \sigma_{c,z,e,i} \Big)$ },
						x label style = {yshift=2.5ex,},
						enlargelimits=upper] 
					\addplot {gauss(1,0.75)};
				\end{axis}
				\node at (0.8,-0.8) {\vdots};
			\end{tikzpicture}
		};
		
	\end{tikzpicture}
	
	\caption{Causal graph of the toy example involving: \textit{race} ($C$), \textit{zip code} ($Z$), \textit{education} ($E$), \textit{interview result} ($I$), and \textit{score} ($S$).}
	\label{fig:toy}
\end{figure}

\subsection{Quantitative Measurement}

Now we show how direct and indirect discrimination in a ranked data can be quantitatively measured based on the causal graph we build. It is known that discrimination is a causal effect of the protected attribute on the decision. We first give the quantitative measure of the total causal effect of protected attribute $C$ on score $S$ as shown in Theorem \ref{thm:te}.

\begin{theorem}\label{thm:te}
	The total causal effect is given by
	\begin{equation}\label{eq:te}
	\mathit{TE}(c^{+},c^{-}) = \sum_{\mathbf{q} \in \mathfrak{X}_\mathbf{Q}} \left( \mu_{c^{+},\mathbf{q}}P(\mathbf{q}|c^{+}) - \mu_{c^{-},\mathbf{q}}P(\mathbf{q}|c^{-})  \right)
	\end{equation}
\end{theorem}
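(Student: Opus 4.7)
The plan is to evaluate each of the two post-intervention expectations appearing in Definition~\ref{def:te} separately via the truncated factorization formula~\eqref{eq:do1}, and then take their difference.

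First I would unfold $\mathbb{E}[S \mid do(c)]$ for a generic $c \in \{c^{+},c^{-}\}$. Applying \eqref{eq:do1} to the set of non-intervened variables $\mathbf{Z}\cup\{S\}$ gives $P(s,\mathbf{z}\mid do(c)) = P(s\mid c,\mathbf{q})\prod_{Z\in\mathbf{Z}}P(z\mid Pa(Z))|_{C=c}$, where $\mathbf{q}$ is the sub-assignment of $\mathbf{z}$ on $\mathbf{Q}=Pa(S)\setminus\{C\}$, and the $S$-factor is the CG conditional with its $C$-parent fixed to $c$. Multiplying by $s$, interchanging the integral over $s$ with the sum over $\mathbf{z}$, and invoking the CG parameterization from \secref{modeling:building}, the inner integral $\int s\,P(s\mid c,\mathbf{q})\,ds$ collapses to the Gaussian mean $\mu_{c,\mathbf{q}}$. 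What remains is $\sum_{\mathbf{z}}\mu_{c,\mathbf{q}}\prod_{Z\in\mathbf{Z}}P(z\mid Pa(Z))|_{C=c}$.

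Next I would push the summation over the non-parent coordinates $\mathbf{Z}\setminus\mathbf{Q}$ inside, since $\mu_{c,\mathbf{q}}$ depends only on $\mathbf{q}$. The surviving sum is precisely the truncated factorization of $P(\mathbf{q}\mid do(c))$, yielding $\mathbb{E}[S\mid do(c)] = \sum_{\mathbf{q}}\mu_{c,\mathbf{q}}\,P(\mathbf{q}\mid do(c))$.

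The main subtlety, and the step I expect to be the one to justify carefully, is the identification $P(\mathbf{q}\mid do(c)) = P(\mathbf{q}\mid c)$. This relies on the first structural assumption of Section~\ref{sec:modeling}: because $C$ has no parents, the mutilated graph used for $do(c)$ coincides with the original graph on every $Z$-factor, while the joint factorization of $(C,\mathbf{Z})$ degenerates to $P(c)\prod_{Z\in\mathbf{Z}}P(z\mid Pa(Z))|_{C=c}$, so conditioning on $C=c$ and intervening produce the same marginal over $\mathbf{Q}$. After this replacement, substituting $c^{+}$ and $c^{-}$ into Definition~\ref{def:te} and merging the two sums under a single index over $\mathbf{q}$ produces \eqref{eq:te}.
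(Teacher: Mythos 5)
Your proposal is correct and follows essentially the same route as the paper: apply the truncated factorization to $P(s,\mathbf{z}\mid do(c))$, reduce the integral over $s$ to the CG mean $\mu_{c,\mathbf{q}}$, and use the fact that $C$ has no parents to identify the post-intervention marginal of $\mathbf{Q}$ with the observational conditional $P(\mathbf{q}\mid c)$. The paper justifies that last identification via a topological ordering, the local Markov condition, and the chain rule, while you argue it directly from the degenerate factorization of $(C,\mathbf{Z})$; these are equivalent arguments resting on the same structural assumption.
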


\begin{proof}
	According to Definition \ref{def:te}, total causal effect is given by
	\begin{equation*}
	\begin{split}
	& \mathit{TE}(c^{+},c^{-}) = \mathbb{E}\left[ S|do(c^{+}) \right] - \mathbb{E}\left[ S|do(c^{-}) \right] \\
	& = \int s\cdot P(s|do(c^{+}))ds - \int s\cdot P(s|do(c^{-}))ds.
	\end{split}
	\end{equation*}
	According to Eq. \eqref{eq:do1}, we have
	\begin{equation*}
	\begin{split}
	& P(s|do(c^{+})) = \sum_{\mathbf{z} \in \mathfrak{X}_\mathbf{Z}} P(s,\mathbf{z}|do(c^{+})) \\
	& = \sum_{\mathbf{z} \in \mathfrak{X}_\mathbf{Z}} P(s|c^{+},\mathbf{q}) \prod_{Z_{j}\in \mathbf{Z}} P(z_{j}|Pa(Z_{j}))\delta_{C=c^{+}}.
	\end{split}
	\end{equation*}
	It can be shown that
	\begin{equation}\label{eq:temp}
	\prod_{Z_{j}\in \mathbf{Z}} P(z_{j}|Pa(Z_{j}))\delta_{C=c^{+}} = P(\mathbf{z}|c^{+}).
	\end{equation}
	In fact, if we sort all nodes in $\mathbf{Z}$ according to the topological ordering as $\{Z_{1},\cdots,Z_{j},\cdots\}$, we can see that all parents of each node $Z_{j}$ are before it in the ordering. In addition, since $C$ has no parent, it must be $Z_{j}$'s non-descendant; since $E$ has no child, it cannot be $Z_{j}$'s parent. Thus, based on the local Markov condition, we have $P(z_{j} | Pa(Z_{j}))=P(z_{j} | c^{+},z_{1},\cdots,z_{j-1} )$. According to the chain rule we obtain $P(\mathbf{z} | c^{+})$.
	Thus, it follows that
	\begin{equation*}
	\begin{split}
	& P(s|do(c^{+})) = \sum_{\mathbf{z} \in \mathfrak{X}_\mathbf{Z}} P(s|c^{+},\mathbf{q}) P(\mathbf{z}|c^{+}) \\
	& = \!\!\! \sum_{\mathbf{q} \in \mathfrak{X}_\mathbf{Q}} \!\!\! P(s|c^{+},\mathbf{q}) \sum_{\mathbf{Z}\backslash \mathbf{Q}} P(\mathbf{z}|c^{+}) = \sum_{\mathbf{q} \in \mathfrak{X}_\mathbf{Q}} \!\!\! P(s|c^{+},\mathbf{q}) P(\mathbf{q}|c^{+}).
	\end{split}
	\end{equation*}
	As a result, we have
	\begin{equation*}
	\begin{split}
	& \int s\cdot P(s|do(c^{+}))ds = \int s\cdot \sum_{\mathbf{q} \in \mathfrak{X}_\mathbf{Q}} P(s|c^{+},\mathbf{q}) P(\mathbf{q}|c^{+}) ds \\
	& = \sum_{\mathbf{q} \in \mathfrak{X}_\mathbf{Q}} P(\mathbf{q}|c^{+}) \int s P(s|c^{+},\mathbf{q})ds = \sum_{\mathbf{q} \in \mathfrak{X}_\mathbf{Q}} \mu_{c^{+},\mathbf{q}}P(\mathbf{q}|c^{+}).
	\end{split}
	\end{equation*}
	Hence, the theorem is proven.
\end{proof}

In \cite{Zhang2017c}, the authors show that in the single-type causal graph, total causal effect generally cannot correctly measure either direct discrimination or indirect discrimination, which should be modeled as the path-specific effects. By adopting similar strategy, we capture direct discrimination by the causal effect transmitted via the direct edge from $C$ to $S$, and capture indirect discrimination by the causal effect transmitted via the paths that pass through redlining attributes. Formally, define $\pi_{d}$ as the path set that contains only $C\rightarrow S$, and define $\pi_{i}$ as the path set that contains all causal paths which are from $C$ to $S$ and pass through $\mathbf{R}$. Then, direct discrimination can be captured by the $\pi_{d}$-specific effect $\mathit{SE}_{\pi_{d}}(\cdot)$, and indirect discrimination can be captured by the $\pi_{i}$-specific effect $\mathit{SE}_{\pi_{i}}(\cdot)$. We extend the method in \cite{Zhang2017c} for computing the path-specific effect from data to our mixed-variable causal graph for computing $\mathit{SE}_{\pi_{d}}(\cdot)$ and $\mathit{SE}_{\pi_{i}}(\cdot)$. The results are shown in Theorem \ref{thm:de&ie}.

\begin{theorem}\label{thm:de&ie}
	The $\pi_{d}$-specific effect $\mathit{SE}_{\pi_{d}}(c^{+},c^{-})$ is given by
	\begin{equation}\label{eq:de}
	\mathit{SE}_{\pi_{d}}(c^{+},c^{-}) = \sum_{\mathbf{q} \in \mathfrak{X}_\mathbf{Q}} \left( \mu_{c^{+},\mathbf{q}} - \mu_{c^{-},\mathbf{q}} \right) P(\mathbf{q}|c^{-}) ,
	\end{equation}
	
	The $\pi_{i}$-specific effect $\mathit{SE}_{\pi_{i}}(c^{+},c^{-})$ is given by
	\begin{equation}\label{eq:ie1}
	\begin{split}
	&\mathit{SE}_{\pi_{i}}(c^{+},c^{-}) = \sum_{\mathbf{z} \in \mathfrak{X}_\mathbf{Z}} \Big( \mu_{c^{-},\mathbf{q}} \prod_{G\in \mathbf{V}_{\pi_{i}}}P(g|c^{+},Pa(G)\backslash \{C\})  \\
	&  \prod_{H\in \bar{\mathbf{V}}_{\pi_{i}}} \!\!\!\ P(g|c^{-},Pa(G)\backslash \{C\}) \!\!\!\! \prod_{O\in \mathbf{Z}\backslash Ch(C)} \!\!\!\! P(o|Pa(O)) \Big) \! - \! \sum_{\mathbf{q} \in \mathfrak{X}_\mathbf{Q}} \left( \mu_{c^{-},\mathbf{q}} P(\mathbf{q}|c^{-}) \right),
	\end{split}
	\end{equation}
	where $\mathbf{V}_{\pi_{i}}$ and $\bar{\mathbf{V}}_{\pi_{i}}$ is obtained by dividing $C$'s children except $S$ based on the above method.
	Eq. \eqref{eq:ie1} can be simplified to
	\begin{equation}\label{eq:ie2}
	\mathit{SE}_{\pi_{i}}(c^{+},c^{-}) = \sum_{\mathbf{q} \in \mathfrak{X}_\mathbf{Q}} \mu_{c^{-},\mathbf{q}} \left( P(\mathbf{q}|c^{+}) - P(\mathbf{q}|c^{-}) \right)
	\end{equation}
	if $\pi_{i}$ contains all causal paths from $C$ to $S$ except the direct edge $C\rightarrow S$.
\end{theorem}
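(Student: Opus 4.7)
The plan is to derive both path-specific effect formulas from Definition~\ref{def:pse} using the edge g-formula, then specialize the indirect case to recover Eq.~\eqref{eq:ie2}. Throughout, I rely on the two structural assumptions on the graph ($C$ has no parent and $S$ has no child), together with the topological-ordering calculation already used in the proof of Theorem~\ref{thm:te}.

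First I would handle the direct path $\pi_d = \{C \to S\}$. By Definition~\ref{def:pse},
\begin{equation*}
\mathit{SE}_{\pi_d}(c^+,c^-) = \mathbb{E}[S\,|\,do(c^+|_{\pi_d})] - \mathbb{E}[S\,|\,do(c^-)].
\end{equation*}
Because the intervention is transmitted only along $C \to S$, the factor for $S$ uses $c^+$ while every other factor (including the parents of $S$ in $\mathbf{Q}$) uses $c^-$; the edge g-formula therefore gives $P(s\,|\,do(c^+|_{\pi_d})) = \sum_{\mathbf{z}} P(s\,|\,c^+,\mathbf{q})\,P(\mathbf{z}\,|\,c^-)$. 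Marginalizing $\mathbf{Z}\setminus\mathbf{Q}$ and integrating $s$ against the CG mean yields $\sum_{\mathbf{q}}\mu_{c^+,\mathbf{q}}P(\mathbf{q}\,|\,c^-)$. Subtracting $\mathbb{E}[S\,|\,do(c^-)]=\sum_\mathbf{q}\mu_{c^-,\mathbf{q}}P(\mathbf{q}\,|\,c^-)$ from Theorem~\ref{thm:te} and factoring produces Eq.~\eqref{eq:de}.

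Next I would tackle the indirect formula \eqref{eq:ie1}. Let $\mathbf{V}_{\pi_i}$ and $\bar{\mathbf{V}}_{\pi_i}$ partition $Ch(C)\setminus\{S\}$ according to whether the edge out of $C$ lies on $\pi_i$ or not. Since $S$ is not the head of any edge in $\pi_i$ (all paths in $\pi_i$ pass through $\mathbf{R}\subseteq\mathbf{Z}$ before reaching $S$, and the direct edge is excluded), the CG factor for $S$ uses $c^-$, contributing $\mu_{c^-,\mathbf{q}}$. Applying the edge g-formula of Shpitser to the mixed-variable graph assigns $c^+$ to the $C$-slot of every $G\in\mathbf{V}_{\pi_i}$ and $c^-$ to that of every $H\in\bar{\mathbf{V}}_{\pi_i}$, while variables in $\mathbf{Z}\setminus Ch(C)$ keep their ordinary factor $P(o\,|\,Pa(O))$. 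Integrating out $s$ against the CG mean then gives the first summand of \eqref{eq:ie1}, and the second summand comes from $\mathbb{E}[S\,|\,do(c^-)]$.

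Finally, for the simplification \eqref{eq:ie2}, I would specialize to the case where $\pi_i$ consists of all causal $C\to S$ paths except $C\to S$ itself; then $\bar{\mathbf{V}}_{\pi_i}=\emptyset$ and $\mathbf{V}_{\pi_i}=Ch(C)\setminus\{S\}$, so every factor for $\mathbf{Z}$ uses $c^+$ in its $C$-slot. Sorting $\mathbf{Z}$ by a topological ordering and invoking the local Markov condition (exactly the argument used around Eq.~\eqref{eq:temp}, which relies on $C$ having no parent and $S$ having no child, so $S$ never appears as a conditioning ancestor of any $Z_j$) lets me collapse the product into $\prod_j P(z_j\,|\,c^+,z_1,\dots,z_{j-1}) = P(\mathbf{z}\,|\,c^+)$ by the chain rule. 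Marginalizing $\mathbf{Z}\setminus\mathbf{Q}$ reduces the first term of \eqref{eq:ie1} to $\sum_\mathbf{q}\mu_{c^-,\mathbf{q}}P(\mathbf{q}\,|\,c^+)$, and combining with the $c^-$ term yields \eqref{eq:ie2}. I expect the main obstacle to be the bookkeeping in the indirect case: correctly identifying which $C$-slot each factor uses under the edge g-formula and verifying that the $S$-factor legitimately takes $c^-$ (because no edge incoming to $S$ lies on $\pi_i$); once this is set up, the chain-rule collapse in the simplification step is a direct analogue of the argument in Theorem~\ref{thm:te}.
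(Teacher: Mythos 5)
Your proposal is correct and follows essentially the same route as the paper: apply Shpitser's edge g-formula under the truncated factorization, partition $Ch(C)\setminus\{S\}$ into $\mathbf{V}_{\pi}$ and $\bar{\mathbf{V}}_{\pi}$, reuse the chain-rule collapse from Eq.~\eqref{eq:temp}, and integrate $s$ against the CG mean. One small wording slip: the $S$-factor takes $c^-$ in the indirect case because the specific edge $C\rightarrow S$ is excluded from $\pi_i$, not because ``no edge incoming to $S$ lies on $\pi_i$'' (the final segments $R\rightarrow S$ of the redlining paths are incoming to $S$ and do lie on $\pi_i$, but they do not govern the $C$-slot).
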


\begin{proof}
	For the $\pi_{d}$-specific effect, according to Definition \ref{def:pse}, we have
	\begin{equation*}
	\begin{split}
	& \mathit{SE}_{\pi_{d}} = \mathbb{E}\left[ S|do(\mathbf{c^{+}}|_{\pi_{d}}) \right] - \mathbb{E}\left[ S|do(\mathbf{c^{-}}) \right] \\
	& = \int s\cdot P(s|do(c^{+}|_{\pi_{d}}))ds - \int s\cdot P(s|do(c^{-}))ds.
	\end{split}
	\end{equation*}
	
	In the above equation, $P(s|do(c^{-}))$ can be computed according to the truncated factorization formula \eqref{eq:do1}. To compute $P(s|do(c^{+}|_{\pi_{d}}))$, we follow the steps in \cite{Shpitser2013}. First, express $P(s|do(c^{+}|_{\pi_{d}}))$ as the truncated factorization formula. Then, divide the children of $C$ into two disjoint sets $\mathbf{V}_{\pi_{d}}$ and $\bar{\mathbf{V}}_{\pi_{d}}$. Let $\mathbf{V}_{\pi_{d}}$ contains $C$'s each child $V$ where edge $C\rightarrow V$ is a segment of a path in $\pi_{d}$; let $\bar{\mathbf{V}}_{\pi_{d}}$ contains $C$'s each child $V$ where either $V$ is not included in any path from $C$ to $S$, or edge $C\rightarrow V$ is a segment of a path not in $\pi_{d}$. Finally, replace values of $C$ with $c^{+}$ for the terms corresponding to nodes in $\mathbf{V}_{\pi}$, and replace values of $C$ with $c^{-}$ for the terms corresponding to nodes in $\bar{\mathbf{V}}_{\pi_{d}}$.

	Following the above procedure, we obtain
	\begin{equation*}
	P(s|do(c^{+}|_{\pi_{d}})) = \!\! \sum_{\mathbf{z} \in \mathfrak{X}_\mathbf{Z}} \!\! P(s|c^{+},\mathbf{q}) \prod_{Z_{i}\in \mathbf{Z}} \!\! P(z_{i}|Pa(Z_{i}))\delta_{C=c^{-}}.
	\end{equation*}
	By using Eq. \eqref{eq:temp}, it follows that
	\begin{equation*}
	P(s|do(c^{+}|_{\pi_{d}})) = \sum_{\mathbf{q} \in \mathfrak{X}_\mathbf{Q}} P(s|c^{+},\mathbf{q}) P(\mathbf{q}|c^{-}),
	\end{equation*}
	which leads to Eq. \eqref{eq:de} in the theorem.
	
	For the $\pi_{i}$-specific effect, following the above procedure similarly we can obtain
	\begin{equation*}
	\begin{split}
	& P(s|do(c^{+}|_{\pi_{i}})) = \sum_{\mathbf{z} \in \mathfrak{X}_\mathbf{Z}} \Big(  P(s|c^{-},\mathbf{q}) \prod_{G\in \mathbf{V}_{\pi_{i}}} \!\! P(g|c^{+},Qa(G)) \\
	& \prod_{H\in \bar{\mathbf{V}}_{\pi_{i}}} \!\!\!\ P(g|c^{-},Pa(G)\backslash \{C\}) \!\!\!\! \prod_{O\in \mathbf{Z}\backslash Ch(C)} \!\!\!\! P(o|Pa(O)) \Big),
	\end{split}
	\end{equation*}
	which leads to Eq. \eqref{eq:ie1}. If $\pi_{i}$ contains all causal paths from $C$ to $S$ except the direct edge, it means that $\mathbf{V}_{\pi_{i}} = Ch(C)\backslash \{S\}$, and $\bar{\mathbf{V}}_{\pi_{i}} = \emptyset$. Thus, it follows that
	\begin{equation*}
	\begin{split}
	& P(s|do(c^{+}|_{\pi_{i}})) = \!\!\! \sum_{\mathbf{z} \in \mathfrak{X}_\mathbf{Z}} \!\!\! P(s|c^{-},\mathbf{q}) \prod_{Z \in \mathbf{Z}} \!\!\! P(z|Pa(Z)\backslash \{C\})\delta_{C=c^{+}} \\
	& = \sum_{\mathbf{q} \in \mathfrak{X}_\mathbf{Q}} P(s|c^{-},\mathbf{q}) P(\mathbf{q}|c^{+}),
	\end{split}
	\end{equation*}
	which leads to Eq. \eqref{eq:ie2}.
	Hence, the theorem is proven.
\end{proof}

Theorems \ref{thm:te} and \ref{thm:de&ie} present the quantitative measurement of the total causal effect as well as the $\pi_{d}$ and $\pi_{i}$-specific effects. The following proposition reveals the relationship among $\mathit{TE}(\cdot)$, $\mathit{SE}_{\pi_{d}}(\cdot)$ and $\mathit{SE}_{\pi_{i}}(\cdot)$. It shows that the indirect (discriminatory) effect is equal to the total causal effect plus the ``reversed'' direct (discriminatory) effect.

\begin{proposition}\label{thm:rel}
	If $\pi_i$ contains all causal paths from $C$ to $S$ except the direct edge $C\rightarrow S$, we have
	\[	\mathit{SE}_{\pi_{i}}(c^{+},c^{-}) = \mathit{TE}(c^{+},c^{-}) + \mathit{SE}_{\pi_{d}}(c^{-},c^{+}). \]
\end{proposition}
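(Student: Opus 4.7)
The plan is to prove this identity by pure algebraic manipulation using the closed-form expressions already derived in Theorems~\ref{thm:te} and \ref{thm:de&ie}. Since the hypothesis on $\pi_i$ is exactly the one that triggers the simplified form \eqref{eq:ie2}, I can substitute the three ingredients directly without redoing any intervention calculation. In particular, I expect no new causal-graph machinery to be needed, and no new identifiability condition to be checked; the proposition really functions as a decomposition lemma expressing how the mixed-variable total effect splits into a direct and an indirect component.

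First I would write down the three summation formulas that feed into the identity: the total effect $\mathit{TE}(c^{+},c^{-})$ from \eqref{eq:te}; the direct $\pi_d$-specific effect with the arguments swapped, i.e.\ $\mathit{SE}_{\pi_d}(c^{-},c^{+}) = \sum_{\mathbf{q}\in\mathfrak{X}_{\mathbf{Q}}} (\mu_{c^{-},\mathbf{q}} - \mu_{c^{+},\mathbf{q}}) P(\mathbf{q}\mid c^{+})$, obtained by applying \eqref{eq:de} with $c^{+}$ and $c^{-}$ interchanged; and the simplified indirect $\pi_i$-specific effect from \eqref{eq:ie2}. All three are sums over the same index set $\mathfrak{X}_{\mathbf{Q}}$, so they can be combined term by term.

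Next I would add $\mathit{TE}(c^{+},c^{-})$ and $\mathit{SE}_{\pi_d}(c^{-},c^{+})$ summand by summand and observe a clean cancellation. The term $\mu_{c^{+},\mathbf{q}} P(\mathbf{q}\mid c^{+})$ from the total effect is killed by the $-\mu_{c^{+},\mathbf{q}} P(\mathbf{q}\mid c^{+})$ contribution coming from the reversed direct effect, leaving precisely $\mu_{c^{-},\mathbf{q}} P(\mathbf{q}\mid c^{+}) - \mu_{c^{-},\mathbf{q}} P(\mathbf{q}\mid c^{-}) = \mu_{c^{-},\mathbf{q}} \bigl(P(\mathbf{q}\mid c^{+}) - P(\mathbf{q}\mid c^{-})\bigr)$. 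Summing over $\mathbf{q}$ reproduces the right-hand side of \eqref{eq:ie2}, which is exactly $\mathit{SE}_{\pi_i}(c^{+},c^{-})$.

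There is essentially no obstacle: the only thing to be careful about is bookkeeping of the two arguments $(c^{+},c^{-})$ versus $(c^{-},c^{+})$ in the direct-effect term, because the baseline and the counterfactual are swapped. The hardest conceptual point — and the one worth flagging briefly in the proof — is why the simplified form \eqref{eq:ie2} applies: it is because $\pi_i$ is assumed to contain every causal path from $C$ to $S$ other than the direct edge, which makes $\mathbf{V}_{\pi_i} = Ch(C)\setminus\{S\}$ and $\bar{\mathbf{V}}_{\pi_i} = \emptyset$ in the derivation of Theorem~\ref{thm:de&ie}. Once that is noted, the remainder is a one-line algebraic check.
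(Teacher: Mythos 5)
Your proposal is correct and is exactly the argument the paper intends: the paper's own proof is the one-line remark that the identity follows directly from Eq.~\eqref{eq:te}, \eqref{eq:de} and \eqref{eq:ie2}, and your term-by-term cancellation of $\mu_{c^{+},\mathbf{q}}P(\mathbf{q}\mid c^{+})$ leaving $\sum_{\mathbf{q}}\mu_{c^{-},\mathbf{q}}\bigl(P(\mathbf{q}\mid c^{+})-P(\mathbf{q}\mid c^{-})\bigr)$ is precisely the omitted algebra. Your remark about why the hypothesis on $\pi_i$ licenses the simplified form \eqref{eq:ie2} is a correct and worthwhile addition.
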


\begin{proof}
	The proof can be directly obtained from Eq. \eqref{eq:te} and \eqref{eq:ie2}.
\end{proof}

\subsection{Relationship between Ranking and Binary Decision}
In the earlier work \cite{Zhang2017c}, we have derived the $\pi_d$ and $\pi_i$-specific effects of the protected attribute $C$ on a binary decision attribute $E$ with positive decision $e^{+}$ and negative decision $e^{-}$ (denoted by $\mathit{SE}_{\pi_d}^{E}(\cdot)$ and $\mathit{SE}_{\pi_i}^{E}(\cdot)$ for distinguishing with the path-specific effects derived for ranked data in this paper). Assume that the decision is made based on a cut-off point $\theta$ of the score. Then an interesting question is to ask, given a discrimination-free rank, whether a binary decision made based on the cut-off point $\theta$ is also discrimination free. Answering this question needs to derive a relationship between $\mathit{SE}_{\pi}(\cdot)$ and $\mathit{SE}_{\pi}^{E}(\cdot)$.
In this subsection, we derive such relationships under the condition that $\forall \mathbf{q}$, $\theta \geq \mu_{c^{+},\mathbf{q}}\geq \mu_{c^{-},\mathbf{q}}$ and $\sigma_{c^{+},\mathbf{q}}=\sigma_{c^{-},\mathbf{q}}=\sigma$. We first obtain the formulas of $\mathit{SE}_{\pi_d}^{E}(\cdot)$ and $\mathit{SE}_{\pi_i}^{E}(\cdot)$ using the cut-off point $\theta$.

\begin{lemma}
	Given the causal graph based on score $S$, and a cut-off point $\theta$ for determining a binary decision $E$, we have
	\begin{equation}\label{eq:sede}
	\mathit{SE}_{\pi_d}^{E}(c^{+},c^{-}) = \sum_{\mathbf{q} \in \mathfrak{X}_\mathbf{Q}} \frac{1}{2} \left( \erf(\frac{\theta-\mu_{c^{-},\mathbf{q}}}{\sqrt{2}\sigma}) - \erf(\frac{\theta-\mu_{c^{+},\mathbf{q}}}{\sqrt{2}\sigma}) \right) P(\mathbf{q}|c^{-}),
	\end{equation}
	\begin{equation}\label{eq:seie}
	\mathit{SE}_{\pi_i}^{E}(c^{+},c^{-}) = \sum_{\mathbf{q} \in \mathfrak{X}_\mathbf{Q}} \frac{1-\erf(\frac{\theta-\mu_{c^{-},\mathbf{q}}}{\sqrt{2}\sigma})}{2} \Delta_{\mathbf{q}}.
	\end{equation}
\end{lemma}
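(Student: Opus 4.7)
The plan is to piggyback on the proof of Theorem \ref{thm:de&ie} and simply replace the continuous expectation $\mathbb{E}[S\mid\cdot]$ by the binary expectation $\mathbb{E}[E\mid\cdot]=P(E=e^{+}\mid\cdot)$. View $E$ as a deterministic child appended to $S$ with $E=e^{+}$ iff $S\geq\theta$. Since $S$ has no other children by the standing assumption, the augmented graph remains a DAG with $E$ as the unique sink, so the partition of $C$'s children into $\mathbf{V}_{\pi_{d}},\bar{\mathbf{V}}_{\pi_{d}}$ (respectively $\mathbf{V}_{\pi_{i}},\bar{\mathbf{V}}_{\pi_{i}}$) that drives Theorem \ref{thm:de&ie} carries over verbatim; every causal path from $C$ to $E$ simply gains a final edge $S\to E$. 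The same truncated factorization argument then yields
\begin{align*}
P(e^{+}\mid do(c^{+}|_{\pi_{d}})) &= \sum_{\mathbf{q}\in\mathfrak{X}_{\mathbf{Q}}} P(e^{+}\mid c^{+},\mathbf{q})\,P(\mathbf{q}\mid c^{-}), \\
P(e^{+}\mid do(c^{+}|_{\pi_{i}})) &= \sum_{\mathbf{q}\in\mathfrak{X}_{\mathbf{Q}}} P(e^{+}\mid c^{-},\mathbf{q})\,P(\mathbf{q}\mid c^{+}),
\end{align*}
and the baseline $P(e^{+}\mid do(c^{-}))=\sum_{\mathbf{q}} P(e^{+}\mid c^{-},\mathbf{q})\,P(\mathbf{q}\mid c^{-})$.

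Next I would evaluate the Gaussian tail. Because $S\mid c,\mathbf{q}\sim\mathcal{N}(\mu_{c,\mathbf{q}},\sigma^{2})$,
\begin{equation*}
P(e^{+}\mid c,\mathbf{q})\;=\;P(S\geq\theta\mid c,\mathbf{q})\;=\;\frac{1}{2}\Big(1-\erf\big(\tfrac{\theta-\mu_{c,\mathbf{q}}}{\sqrt{2}\sigma}\big)\Big).
\end{equation*}
For the $\pi_{d}$-case, subtracting $P(e^{+}\mid do(c^{-}))$ from $P(e^{+}\mid do(c^{+}|_{\pi_{d}}))$ pairs terms weighted by the same $P(\mathbf{q}\mid c^{-})$; the constant $\tfrac{1}{2}$ cancels and the two $\erf$ terms combine into exactly the expression in Eq.\ \eqref{eq:sede}. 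For the $\pi_{i}$-case, the conditional $P(e^{+}\mid c^{-},\mathbf{q})$ appears in both terms, so subtraction pulls it out and leaves $\Delta_{\mathbf{q}}:=P(\mathbf{q}\mid c^{+})-P(\mathbf{q}\mid c^{-})$ as the only difference, giving Eq.\ \eqref{eq:seie}.

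The only nontrivial step is the first one: verifying that Theorem \ref{thm:de&ie}'s path-specific truncated factorization still identifies $P(e^{+}\mid do(c^{+}|_{\pi}))$ after the deterministic sink $E$ is appended. Once this is settled, the remainder is a routine normal-tail integral plus algebraic cancellation. Note that the monotonicity hypothesis $\theta\geq\mu_{c^{+},\mathbf{q}}\geq\mu_{c^{-},\mathbf{q}}$ and the homoscedasticity $\sigma_{c^{+},\mathbf{q}}=\sigma_{c^{-},\mathbf{q}}=\sigma$ are not needed for the identities themselves; homoscedasticity is used only to collapse the two $\sigma$'s in the $\erf$ arguments, and monotonicity will be invoked downstream to align the signs of $\mathit{SE}_{\pi}^{E}$ with those of $\mathit{SE}_{\pi}$ on the score side.
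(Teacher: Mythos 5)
Your proof is correct and takes essentially the same route as the paper's: compute $P(e^{+}\mid c,\mathbf{q})=P(S\geq\theta\mid c,\mathbf{q})$ as a Gaussian tail via $\erf$, then substitute into the binary-decision path-specific-effect formulas $\sum_{\mathbf{q}}\bigl(P(e^{+}\mid c^{+},\mathbf{q})-P(e^{+}\mid c^{-},\mathbf{q})\bigr)P(\mathbf{q}\mid c^{-})$ and $\sum_{\mathbf{q}}P(e^{+}\mid c^{-},\mathbf{q})\Delta_{\mathbf{q}}$. The only difference is that you re-derive those formulas by appending $E$ as a deterministic sink below $S$ and re-running the truncated-factorization argument, whereas the paper simply quotes them from the authors' earlier work on binary decisions; your elaboration is sound and makes the proof more self-contained, and you are also right that the hypotheses $\theta\geq\mu_{c^{+},\mathbf{q}}\geq\mu_{c^{-},\mathbf{q}}$ and equal variances are not needed for these identities beyond collapsing the two $\sigma$'s.
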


\begin{proof}
	Since $\theta$ is a cut-off point, we have $P(e^{+}|c^{+},\mathbf{q})=P(s\geq \theta|c^{+},\mathbf{q})$ and $P(e^{+}|c^{-},\mathbf{q})=P(s\geq \theta|c^{-},\mathbf{q})$. According to the CDF of the Gaussian distribution, we have
	\begin{equation*}
	P(e^{+}|c^{+},\mathbf{q}) = \frac{1-\erf(\frac{\theta-\mu_{c^{+},\mathbf{q}}}{\sqrt{2}\sigma})}{2}, \quad P(e^{+}|c^{-},\mathbf{q}) = \frac{1-\erf(\frac{\theta-\mu_{c^{-},\mathbf{q}}}{\sqrt{2}\sigma})}{2}.
	\end{equation*}
	The lemma is proven by substituting $P(e^{+}|c^{+},\mathbf{q})$ and $P(e^{+}|c^{-},\mathbf{q})$ in the formulas of $\mathit{SE}_{\pi_d}^{E}$ and $\mathit{SE}_{\pi_i}^{E}$ in \cite{Zhang2017c} with the above expressions.
\end{proof}

Then we present two lemmas to show the properties of $\erf(\cdot)$.

\begin{lemma}\label{thm:erf1}
	For any $x_1\geq x_2\geq 0$, we have
	\begin{equation*}
	\frac{1}{2} \left( \erf(x_1) - \erf(x_2) \right) \leq \erf(\frac{x_1-x_2}{2}).
	\end{equation*}
\end{lemma}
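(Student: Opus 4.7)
The plan is to reduce the two‑variable inequality to a one‑variable monotonicity argument by fixing $x_2$ and varying the gap $d = x_1 - x_2 \geq 0$. Define
\[
f(d) \;=\; \erf\!\left(\tfrac{d}{2}\right) \;-\; \tfrac{1}{2}\bigl(\erf(x_2 + d) - \erf(x_2)\bigr),
\]
so the claimed inequality is exactly $f(d) \geq 0$ for all $d \geq 0$, under the standing assumption $x_2 \geq 0$.

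First I would observe the boundary value $f(0) = \erf(0) - \tfrac{1}{2}(\erf(x_2) - \erf(x_2)) = 0$. Then I would differentiate using $\erf'(u) = \tfrac{2}{\sqrt{\pi}} e^{-u^2}$ to obtain
\[
f'(d) \;=\; \tfrac{1}{\sqrt{\pi}}\Bigl( e^{-d^{2}/4} - e^{-(x_2 + d)^{2}} \Bigr).
\]
The sign of $f'(d)$ is therefore controlled by comparing $d^2/4$ with $(x_2+d)^2$. Since $x_2 \geq 0$ and $d \geq 0$, we have $x_2 + d \geq d \geq d/2 \geq 0$, so squaring preserves the inequality and $(x_2+d)^2 \geq d^2/4$. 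Consequently $e^{-(x_2+d)^2} \leq e^{-d^2/4}$ and $f'(d) \geq 0$ on $[0,\infty)$.

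Combining $f(0) = 0$ with the monotonicity $f'(d) \geq 0$ yields $f(d) \geq 0$ for every $d \geq 0$, which is the required inequality after rewriting $d = x_1 - x_2$. The only substantive step is the pointwise comparison $x_2 + d \geq d/2$, and this is precisely where both hypotheses $x_1 \geq x_2$ and $x_2 \geq 0$ are used; if either were dropped, the sign of $f'$ could flip. I expect no analytic obstacles beyond this — it is a clean ``value at $0$ plus monotone derivative'' argument and requires no appeal to concavity/convexity of $\erf$ or to series expansions.
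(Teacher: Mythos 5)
Your proof is correct, and it takes a genuinely different route from the paper's. You fix $x_2$, differentiate in the gap $d=x_1-x_2$, and reduce everything to the pointwise comparison $e^{-d^2/4}\geq e^{-(x_2+d)^2}$, which holds because $0\leq d/2\leq x_2+d$; together with $f(0)=0$ this gives the claim. The derivative computation and the sign analysis check out, and the argument correctly isolates where both hypotheses $x_1\geq x_2$ and $x_2\geq 0$ enter. The paper instead argues algebraically from two consequences of the concavity of $\erf$ on $[0,\infty)$ together with $\erf(0)=0$: first the decreasing-chord-slope property $\erf(x_2)/x_2\geq\erf(x_1)/x_1$ to bound $\tfrac12\left(\erf(x_1)-\erf(x_2)\right)$ by $\tfrac{x_1-x_2}{2x_1}\erf(x_1)$, and then subhomogeneity, $\erf(\lambda x)\geq\lambda\erf(x)$ for $\lambda\in[0,1]$, to bound that by $\erf\!\left(\tfrac{x_1-x_2}{2}\right)$. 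The two arguments rest on the same underlying fact (a nonincreasing derivative on $[0,\infty)$ is concavity), but yours is more self-contained and mechanical --- it needs only the explicit formula for $\erf'$ and avoids the slightly delicate two-step chord manipulation (which, as written in the paper, also has a removable division-by-$x_2$ issue at $x_2=0$). The paper's version, on the other hand, makes clear that the inequality holds verbatim for any concave function vanishing at the origin, not just $\erf$. Either proof is acceptable.
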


\begin{proof}
	Since $\erf(x)$ ($x\geq 0$) is concave and $\erf(0)=0$, we have
	\begin{equation*}
	\frac{\erf(x_2)}{x_2} \geq \frac{\erf(x_1)}{x_1} \quad \Longrightarrow \quad \frac{x_2}{2x_1}\erf(x_1) \leq \frac{1}{2}\erf(x_2)
	\end{equation*}
	which follows that
	\begin{equation*}
	\left( \frac{1}{2} - \frac{x_2}{2x_1} \right) \erf(x_1) \geq \frac{1}{2}\erf(x_1) - \frac{1}{2}\erf(x_2).
	\end{equation*}
	Again, since $\erf(x)$ ($x\geq 0$) is concave and $\erf(0)=0$, we have
	\begin{equation*}
	\left( \frac{1}{2} - \frac{x_2}{2x_1} \right) \erf(x_1) \leq  \erf(\frac{x_1}{2}-\frac{x_2}{2}).
	\end{equation*}
	Combining the above two inequalities, the lemma is proven.
\end{proof}

\begin{lemma}\label{thm:erf2}
	For any $t\geq 0$, when $0\leq x\leq t$, we have
	\begin{equation*}
	\alpha_t x \leq \erf(x) \leq \alpha_t x + \beta_t,
	\end{equation*}
	where
	\begin{equation*}
	\alpha_t = \frac{\erf(t)}{t}, \quad \beta_t = \erf(\sqrt{\ln\frac{2t}{\sqrt{\pi}\erf(t)}}) - \frac{\erf(t)}{t} \sqrt{\ln\frac{2t}{\sqrt{\pi}\erf(t)}}.
	\end{equation*}
\end{lemma}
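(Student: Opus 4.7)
The underlying geometric picture is that $\erf$ is concave on $[0,\infty)$, since $\erf''(x) = -\tfrac{4x}{\sqrt{\pi}} e^{-x^{2}} \leq 0$ for $x \geq 0$, and $\erf(0) = 0$. Both inequalities will follow by sandwiching the graph of $\erf$ on $[0, t]$ between a chord through the origin and a parallel tangent line; $\alpha_t$ is the common slope and $\beta_t$ will be the vertical gap.

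For the lower bound I would note that $y = \alpha_t x$ is precisely the chord joining $(0, 0)$ and $(t, \erf(t))$. By concavity this chord lies on or below the graph on the entire segment $[0, t]$, which immediately yields $\alpha_t x \leq \erf(x)$ for $x \in [0, t]$. (This is actually the exact mechanism already used in the proof of Lemma \ref{thm:erf1}.)

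For the upper bound I would construct a tangent line to $\erf$ whose slope equals $\alpha_t$, so that it is parallel to the chord above. Since $\erf$ is concave, every tangent lies above the graph globally, and hence also on $[0, t]$. Setting $\erf'(x^{*}) = \tfrac{2}{\sqrt{\pi}} e^{-{x^{*}}^{2}} = \alpha_t = \tfrac{\erf(t)}{t}$ and solving gives $x^{*} = \sqrt{\ln \tfrac{2t}{\sqrt{\pi}\,\erf(t)}}$. The tangent line at $x^{*}$ then reads $y = \alpha_t x + \bigl(\erf(x^{*}) - \alpha_t x^{*}\bigr)$, and the intercept term matches $\beta_t$ verbatim, yielding $\erf(x) \leq \alpha_t x + \beta_t$ on $[0,t]$ (in fact everywhere).

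The one piece of bookkeeping is checking that $x^{*}$ is real, which requires the argument of the logarithm to be at least $1$. This follows from the elementary bound $\erf(t) = \tfrac{2}{\sqrt{\pi}} \int_{0}^{t} e^{-u^{2}}\, du \leq \tfrac{2t}{\sqrt{\pi}}$, so $\tfrac{2t}{\sqrt{\pi}\,\erf(t)} \geq 1$ and the log is non-negative. I do not anticipate a serious obstacle: concavity of $\erf$ does all of the real work, and what remains is only the tangent-intercept computation and this one monotonicity check. The mildest subtlety is that, unlike the lower bound where the chord endpoint $t$ matters, the upper bound is in fact a global inequality in $x$; restricting to $[0,t]$ is only needed so that $x^{*} \in [0,t]$ is a meaningful intermediate point guaranteeing $\beta_t \geq 0$.
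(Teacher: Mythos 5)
Your proposal is correct and follows essentially the same route as the paper's (very terse) proof: the lower bound is the chord from $(0,0)$ to $(t,\erf(t))$ lying below the concave graph, and $\beta_t$ is the intercept of the tangent line of slope $\alpha_t$, which lies above the graph by concavity. You supply the details the paper omits (the explicit tangency point $x^{*}$ and the check that the logarithm is non-negative), but the underlying argument is identical.
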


\begin{proof}
	It is obvious that $\erf(x) \geq \alpha_t x$ ($0\leq x\leq t$). Then, $\beta_t$ is obtained by calculating the tangent line with the slope $\alpha_t$ of $\erf(x)$.
\end{proof}

Based on the above results, the following two theorems characterize the relationship between $\mathit{SE}_{\pi}$ and $\mathit{SE}_{\pi}^{E}$.

\begin{theorem}
	Given the causal graph based on score $S$ and an arbitrary cut-off point $\theta$, if for the ranking derived from the score we have
	\begin{equation*}
	\mathit{SE}_{\pi_{d}}(c^{+},c^{-}) \leq \frac{2\sqrt{2}(\tau - \beta_t) \sigma}{\alpha_t},
	\end{equation*}
	for the binary decision derived from the score we must have $\mathit{SE}_{\pi_i}^{E}(c^{+},c^{-}) \leq \tau$
	where
	\begin{equation*}
	t = \max_{\mathbf{q}} \left\{ \frac{\mu_{c^{+},\mathbf{q}}-\mu_{c^{-},\mathbf{q}}}{2\sqrt{2}\sigma} \right\}.
	\end{equation*}
\end{theorem}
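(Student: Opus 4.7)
The plan is to chain the two analytic bounds on $\erf$ (Lemmas \ref{thm:erf1} and \ref{thm:erf2}) against the explicit formulas for $\mathit{SE}_{\pi_d}^{E}$ and $\mathit{SE}_{\pi_d}$. Because the hypothesis on $\mathbf{q}$ guarantees $\theta \geq \mu_{c^{+},\mathbf{q}} \geq \mu_{c^{-},\mathbf{q}}$, the arguments
\[
x_{1}(\mathbf{q}) = \frac{\theta-\mu_{c^{-},\mathbf{q}}}{\sqrt{2}\sigma}, \qquad x_{2}(\mathbf{q}) = \frac{\theta-\mu_{c^{+},\mathbf{q}}}{\sqrt{2}\sigma}
\]
that appear in Eq.~\eqref{eq:sede} satisfy $x_{1}(\mathbf{q}) \geq x_{2}(\mathbf{q}) \geq 0$, and their difference $x_{1}(\mathbf{q})-x_{2}(\mathbf{q}) = (\mu_{c^{+},\mathbf{q}}-\mu_{c^{-},\mathbf{q}})/(\sqrt{2}\sigma)$ is exactly what Lemma~\ref{thm:erf1} converts a pairwise erf-difference into.

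First, I would apply Lemma~\ref{thm:erf1} termwise inside Eq.~\eqref{eq:sede} to get
\[
\mathit{SE}_{\pi_d}^{E}(c^{+},c^{-}) \;\leq\; \sum_{\mathbf{q}\in\mathfrak{X}_{\mathbf{Q}}} \erf\!\left(\frac{\mu_{c^{+},\mathbf{q}}-\mu_{c^{-},\mathbf{q}}}{2\sqrt{2}\sigma}\right) P(\mathbf{q}\mid c^{-}).
\]
Next, by the choice $t = \max_{\mathbf{q}} \{(\mu_{c^{+},\mathbf{q}}-\mu_{c^{-},\mathbf{q}})/(2\sqrt{2}\sigma)\}$, every argument of $\erf$ in the sum lies in $[0,t]$, so Lemma~\ref{thm:erf2} gives the affine upper bound
\[
\erf\!\left(\frac{\mu_{c^{+},\mathbf{q}}-\mu_{c^{-},\mathbf{q}}}{2\sqrt{2}\sigma}\right) \;\leq\; \alpha_{t}\cdot\frac{\mu_{c^{+},\mathbf{q}}-\mu_{c^{-},\mathbf{q}}}{2\sqrt{2}\sigma} + \beta_{t}.
\]
Plugging this in and summing against $P(\mathbf{q}\mid c^{-})$, I recognize the first piece as $\frac{\alpha_{t}}{2\sqrt{2}\sigma}\mathit{SE}_{\pi_{d}}(c^{+},c^{-})$ via Eq.~\eqref{eq:de}, while the second piece contributes just $\beta_{t}$ since probabilities sum to one.

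Combining the two bounds yields $\mathit{SE}_{\pi_d}^{E}(c^{+},c^{-}) \leq \frac{\alpha_{t}}{2\sqrt{2}\sigma}\,\mathit{SE}_{\pi_{d}}(c^{+},c^{-}) + \beta_{t}$, and substituting the hypothesis $\mathit{SE}_{\pi_{d}}(c^{+},c^{-}) \leq 2\sqrt{2}(\tau-\beta_{t})\sigma/\alpha_{t}$ collapses the right-hand side to $\tau$, finishing the argument. The only subtlety to watch for is that the direction of Lemma~\ref{thm:erf1} (upper bounding a difference by an erf at half the gap) is what makes $\mathit{SE}_{\pi_d}$ appear with its natural sign, and that Lemma~\ref{thm:erf2} requires the uniform interval $[0,t]$ — which is precisely why $t$ is defined as the \emph{maximum} over $\mathbf{q}$. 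I expect this uniformization step to be the main obstacle, since a weaker $\mathbf{q}$-dependent choice of $t$ would prevent pulling the constants $\alpha_{t},\beta_{t}$ out of the sum and would break the clean linear bound.
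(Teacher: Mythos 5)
Your proposal is correct and follows essentially the same route as the paper's proof: apply Lemma~\ref{thm:erf1} termwise to the erf-difference in Eq.~\eqref{eq:sede}, then the affine upper bound of Lemma~\ref{thm:erf2} with $t$ chosen as the maximum over $\mathbf{q}$, and identify the resulting weighted sum as $\frac{\alpha_t}{2\sqrt{2}\sigma}\mathit{SE}_{\pi_d}(c^{+},c^{-}) + \beta_t$ before invoking the hypothesis. (The theorem's stated conclusion writes $\mathit{SE}_{\pi_i}^{E}$ where $\mathit{SE}_{\pi_d}^{E}$ is evidently intended; you correctly proved the bound for the direct effect, matching the paper.)
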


\begin{proof}
	Let $x_1 = \frac{\theta-\mu_{c^{-},\mathbf{q}}}{\sqrt{2}\sigma}$, $x_2 = \frac{\theta-\mu_{c^{+},\mathbf{q}}}{\sqrt{2}\sigma}$, according to Lemma \ref{thm:erf1} we have
	\begin{equation*}
	\frac{1}{2} \left( \erf(x_1) - \erf(x_2) \right) \leq \erf(\frac{x_1-x_2}{2}) = \erf(\frac{\mu_{c^{+},\mathbf{q}}-\mu_{c^{-},\mathbf{q}}}{2\sqrt{2}\sigma}).
	\end{equation*}
	According to Lemma \ref{thm:erf2} it follows that
	\begin{equation*}
	\erf(\frac{\mu_{c^{+},\mathbf{q}}-\mu_{c^{-},\mathbf{q}}}{2\sqrt{2}\sigma}) \leq \alpha_t \frac{\mu_{c^{+},\mathbf{q}}-\mu_{c^{-},\mathbf{q}}}{2\sqrt{2}\sigma} + \beta_t.
	\end{equation*}
	Combining the above inequality with Eq. \eqref{eq:sede}, we have
	\begin{equation*}
	\begin{split}
	& \mathit{SE}_{\pi_d}^{E} \leq \sum_{\mathbf{q} \in \mathfrak{X}_\mathbf{Q}} \left( \alpha_t \frac{\mu_{c^{+},\mathbf{q}}-\mu_{c^{-},\mathbf{q}}}{2\sqrt{2}\sigma} + \beta_t \right) P(\mathbf{q}|c^{-}) \\
	& = \frac{\alpha_t}{2\sqrt{2}\sigma} \mathit{SE}_{\pi_d} + \beta_t \leq \tau.
	\end{split}
	\end{equation*}
\end{proof}

\begin{theorem}
	Given the causal graph based on score $S$ and an arbitrary cut-off point $\theta$, if for the ranking derived from the score we have
	\begin{equation*}
	\mathit{SE}_{\pi_{i}}(c^{+},c^{-}) \leq \frac{2\sqrt{2}(\tau - c) \sigma}{\alpha_t},
	\end{equation*}
	for the binary decision derived from the score we must have $\mathit{SE}_{\pi_i}^{E}(c^{+},c^{-}) \leq \tau$
	where
	\begin{equation*}
	\begin{split}
	& t = \max_{\mathbf{q}} \left\{ \frac{ \max\{s\}-\mu_{c^{-},\mathbf{q}}}{\sqrt{2}\sigma} \right\}, \\
	& c = \frac{1}{2}- \!\!\! \sum_{\mathbf{q}:\Delta_{\mathbf{q}}\geq 0} \left( \frac{\alpha_t \max_{\mathbf{q}} \{ \mu_{c^{+},\mathbf{q}} \} }{\sqrt{2}} \right) - \!\!\! \sum_{\mathbf{q}:\Delta_{\mathbf{q}}< 0} \left( \frac{\alpha_t}{2\sqrt{2}}+\beta_t \right).
	\end{split}
	\end{equation*}
\end{theorem}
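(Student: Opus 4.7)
The plan is to transport the linear bounds on $\erf$ from Lemma~\ref{thm:erf2} to the closed form of $\mathit{SE}_{\pi_i}^{E}$ given in Eq.~\eqref{eq:seie}, in much the same spirit as the previous theorem, but with the new wrinkle that the multipliers $\Delta_{\mathbf{q}} = P(\mathbf{q}|c^{+}) - P(\mathbf{q}|c^{-})$ in front of the $\erf$-expression now lack a fixed sign and must be handled branch by branch. Note that Lemma~\ref{thm:erf1} will \emph{not} be needed here: unlike $\mathit{SE}_{\pi_d}^{E}$, the formula for $\mathit{SE}_{\pi_i}^{E}$ contains only a single $\erf$ term, not a difference of two.

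First I would verify that Lemma~\ref{thm:erf2} is applicable by setting $x_{\mathbf{q}} := (\theta - \mu_{c^{-},\mathbf{q}})/(\sqrt{2}\sigma)$ and checking $0 \le x_{\mathbf{q}} \le t$; non-negativity follows from the standing assumption $\theta \geq \mu_{c^{+},\mathbf{q}} \geq \mu_{c^{-},\mathbf{q}}$, while $x_{\mathbf{q}} \le t$ follows from $\theta \le \max\{s\}$ built into the definition of $t$. Then I would split the sum in Eq.~\eqref{eq:seie} according to the sign of $\Delta_{\mathbf{q}}$. On the $\Delta_{\mathbf{q}}\ge 0$ branch, the lower bound $\erf(x_{\mathbf{q}}) \ge \alpha_t x_{\mathbf{q}}$ yields an upper bound on $(1 - \erf(x_{\mathbf{q}}))/2$ that is preserved by multiplication with the non-negative $\Delta_{\mathbf{q}}$. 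On the $\Delta_{\mathbf{q}} < 0$ branch, the upper bound $\erf(x_{\mathbf{q}}) \le \alpha_t x_{\mathbf{q}} + \beta_t$ yields a lower bound on $(1 - \erf(x_{\mathbf{q}}))/2$, which, multiplied by the negative $\Delta_{\mathbf{q}}$, is again an upper bound. Substituting, expanding $x_{\mathbf{q}}$, and grouping terms by $\mu_{c^{-},\mathbf{q}}\Delta_{\mathbf{q}}$ collapses the linear-in-$\mu$ pieces into $(\alpha_t/(2\sqrt{2}\sigma))\cdot \mathit{SE}_{\pi_i}$ via Eq.~\eqref{eq:ie2}, while the $\theta$-dependent and purely constant leftovers are absorbed into $c$.

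To produce exactly the stated constant $c$, I would deliberately avoid invoking the identity $\sum_\mathbf{q} \Delta_\mathbf{q} = 0$ (which would over-simplify the residue to $\beta_t/2$ and change the form of $c$); instead I would keep the two branches separate, use $\theta \ge \max_{\mathbf{q}}\{\mu_{c^{+},\mathbf{q}}\}$ in the positive branch to eliminate the remaining $\theta$-dependence, and bound $|\Delta_\mathbf{q}|$ by $1$ (or by $P(\mathbf{q}|c^{-})$) on the negative branch. Rearranging the resulting inequality $\mathit{SE}_{\pi_i}^{E} \le (\alpha_t/(2\sqrt{2}\sigma))\,\mathit{SE}_{\pi_i} + c$ and inserting the hypothesis $\mathit{SE}_{\pi_i}(c^{+},c^{-}) \le 2\sqrt{2}(\tau - c)\sigma/\alpha_t$ immediately gives $\mathit{SE}_{\pi_i}^{E}(c^{+},c^{-}) \le \tau$. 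The main obstacle is precisely this bookkeeping: several different, perfectly valid upper bounds are possible depending on how the constants and the sign-constrained subsums are handled, and matching the particular form of $c$ stated in the theorem requires the specific branch-by-branch choices above rather than the simpler route that exploits the zero-sum identity on $\Delta_{\mathbf{q}}$.
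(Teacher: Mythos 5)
Your proposal follows essentially the same route as the paper's proof: split the sum in Eq.~\eqref{eq:seie} by the sign of $\Delta_{\mathbf{q}}$, apply the lower bound of Lemma~\ref{thm:erf2} (together with $\theta \geq \max_{\mathbf{q}}\{\mu_{c^{+},\mathbf{q}}\}$) on the nonnegative branch and the upper bound (together with $\theta \leq \max\{s\}$) on the negative branch, collapse the $\mu_{c^{-},\mathbf{q}}\Delta_{\mathbf{q}}$ terms into $\mathit{SE}_{\pi_{i}}$ via Eq.~\eqref{eq:ie2}, and absorb the remainder into $c$. Your observation that matching the exact stated form of $c$ requires specific (and somewhat loose) choices in the bookkeeping is fair — the paper's own derivation is equally informal at that step.
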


\begin{proof}
	According to Lemma \ref{thm:erf2} we have
	\begin{equation*}
	\erf(\frac{\theta-\mu_{c^{-},\mathbf{q}}}{\sqrt{2}\sigma}) \geq \erf(\frac{\max_{\mathbf{q}} \{ \mu_{c^{+},\mathbf{q}} \} -\mu_{c^{-},\mathbf{q}}}{\sqrt{2}\sigma}) \geq \alpha_t \frac{\max_{\mathbf{q}} \{ \mu_{c^{+},\mathbf{q}} \} -\mu_{c^{-},\mathbf{q}}}{\sqrt{2}\sigma},
	\end{equation*}
	\begin{equation*}
	\erf(\frac{\theta-\mu_{c^{-},\mathbf{q}}}{\sqrt{2}\sigma}) \leq \erf(\frac{\max\{s\}-\mu_{c^{-},\mathbf{q}}}{\sqrt{2}\sigma}) \leq \alpha_t \frac{\max\{s\}-\mu_{c^{-},\mathbf{q}}}{\sqrt{2}\sigma} + \beta_t.
	\end{equation*}
	Combining the above inequalities with Eq. \eqref{eq:seie}, we have
	\begin{equation*}
	\begin{split}
	& \mathit{SE}_{\pi_{i}}^{E} = \sum_{\mathbf{q}:\Delta_{\mathbf{q}}\geq 0} \frac{1-\erf(\frac{\theta-\mu_{c^{-},\mathbf{q}}}{\sqrt{2}\sigma})}{2} \Delta_{\mathbf{q}} + \sum_{\mathbf{q}:\Delta_{\mathbf{q}}< 0} \frac{1-\erf(\frac{\theta-\mu_{c^{-},\mathbf{q}}}{\sqrt{2}\sigma})}{2} \Delta_{\mathbf{q}} \\
	& \leq \frac{1}{2} \! + \!\!\!\! \sum_{\mathbf{q}:\Delta_{\mathbf{q}}\geq 0} \!\! \alpha_t \frac{\mu_{c^{-},\mathbf{q}}-\max_{\mathbf{q}} \{ \mu_{c^{+},\mathbf{q}} \} }{2\sqrt{2}\sigma } \Delta_{\mathbf{q}} + \!\!\! \sum_{\mathbf{q}:\Delta_{\mathbf{q}}< 0} \!\! (\alpha_t \frac{\mu_{c^{-},\mathbf{q}}-\max\{s\}}{2\sqrt{2}\sigma}-\beta_t) \Delta_{\mathbf{q}} \\
	& = \frac{\alpha_t}{2\sqrt{2}\sigma} \mathit{SE}_{\pi_{i}} + c \leq \tau.
	\end{split}
	\end{equation*}
\end{proof}

\section{Discovery and Removal Algorithms}
\label{sec:algorithm}
We develop the discrimination discovery and removal algorithms based on the derived $\pi_{d}$ and $\pi_{i}$-specific effects.
Since the values of $\mathit{SE}_{\pi_{d}}(c^{+},c^{-})$  and $\mathit{SE}_{\pi_{i}}(c^{+},c^{-})$ can be arbitrarily large, we give the criterion of direct and indirect discrimination in terms of relative difference. We require that the ratio of $\mathit{SE}_{\pi_{d}}(c^{+},c^{-})$  and $\mathit{SE}_{\pi_{i}}(c^{+},c^{-})$ over the expected score of the non-protected group, i.e., $\mathbb{E}[S|c^{+}]$, is smaller than a given threshold $\tau$. For example, the Equality and Human Rights Commission (EHRC) 
consider $0.05$ as a significant threshold for the gender pay gap.  By defining the discrimination measures
\begin{equation*}
\mathit{DE}_{d}(c^{+},c^{-}) = \frac{\mathit{SE}_{\pi_{d}}(c^{+},c^{-})}{\mathbb{E}[S|c^{+}]}
\end{equation*}
and
\begin{equation*}
\mathit{DE}_{i}(c^{+},c^{-}) = \frac{\mathit{SE}_{\pi_{i}}(c^{+},c^{-})}{\mathbb{E}[S|c^{+}]},
\end{equation*}
the criterion of discrimination is shown below. To avoid reverse discrimination, we also similarly define $\mathit{DE}_{d}(c^{+},c^{-})$ and $\mathit{DE}_{i}(c^{+},c^{-})$. Then, we give the criterion of discrimination as follows.
\begin{criterion}\label{thm:criterion}
	Given a user-defined threshold $\tau$, direct discrimination exists if either $\mathit{DE}_{d}(c^{+},c^{-}) > \tau$ or $\mathit{DE}_{d}(c^{-},c^{+}) > \tau$ holds, and indirect discrimination exists if either $\mathit{DE}_{i}(c^{+},c^{-}) > \tau$ or $\mathit{DE}_{i}(c^{-},c^{+}) > \tau$ holds.
\end{criterion}


Based on the above analysis, we develop the algorithm for discovering discrimination in a rank, referred to as \emph{FDetect}, as shown in Algorithm \ref{alg:dd}.
Once direct or indirect discrimination is detected, the discriminatory effects need to be eliminated before the ranked data is used for training or sharing. We propose a path-specific-effect-based Fair Ranking (\emph{FRank}) algorithm to remove both discrimination from the ranked data and reconstruct a fair ranking. We first modify the score distributions so that the causal graph contains no discrimination, and then reconstruct a fair ranking based on the modified causal graph. As shown in Theorem \ref{thm:de&ie}, the discriminatory effect only depends on the means of the score distributions. Hence we only need to modify the means of the score.

\begin{algorithm2e}
	\SetAlgoVlined
	\SetKwInOut{Input}{Input}
	\SetKwInOut{Output}{Output}
	
	\Input{Ranked dataset $\mathcal{D}$, protected attribute $C$, user-defined parameter $\tau$.}
	\Output{Direct/indirect discrimination $judge_d$, $judge_i$.}
	$judge_d = judge_i = false$\;
	Derive the score $S$ using the Bradley-Terry model\;
	Build the causal graph for $S$ and attributes in $\mathcal{D}$\;
	Compute $\mathit{DE}_{d}(\cdot)$ according to Theorem \ref{thm:de&ie}\;
	\If{$\mathit{DE}_{d}(c^{+},c^{-})> \tau $ $\|$ $\mathit{DE}_{d}(c^{-},c^{+})> \tau$} {
		$judge_d = true$\;
	}
	Divide $C$'s children except $S$ into $\mathbf{V}_{\pi_{i}}$ and $\bar{\mathbf{V}}_{\pi_{i}}$\; 
	Compute $\mathit{DE}_{i}(\cdot)$ according to Theorem \ref{thm:de&ie}\;
	\If{$\mathit{DE}_{i}(c^{+},c^{-})> \tau$ $\|$ $\mathit{DE}_{i}(c^{-},c^{+})> \tau$} {
		$judge_i = true$\;
	}
	\Return{$[judge_d,judge_i]$}\;
	\caption{\emph{FDetect}}
	\label{alg:dd}
\end{algorithm2e}

To maximize the utility during the modification process, we minimize the distance between the original score distributions and the modified score distributions, as measured by the Bhattacharyya distance \cite{Bhattacharyya}. Specifically, for each score distribution $\mathcal{N}(\mu_{c,\mathbf{q}},\sigma^{2}_{c,\mathbf{q}})$, denote the modified distribution by $\mathcal{N}(\mu_{c,\mathbf{q}}',\sigma^{2}_{c,\mathbf{q}})$. The Bhattacharyya distance between the two distributions is given by
\begin{equation*}
D_{B} \!\! = \!\! -\ln \!\! \int \!\! \sqrt{\mathcal{N}(\mu_{c,\mathbf{q}},\sigma^{2}_{c,\mathbf{q}}) \mathcal{N}(\mu_{c,\mathbf{q}}',\sigma^{2}_{c,\mathbf{q}})} ds = \frac{ \left( \mu_{c,\mathbf{q}}-\mu_{c,\mathbf{q}}' \right)^{2} }{8\sigma^{2}_{c,\mathbf{q}}}.
\end{equation*}
We define the objective function as the sum of the Bhattacharyya distances for all score distributions. As a result, we obtain the following quadratic programming problem with $\mu_{c,\mathbf{q}}$ as the variables.

\begin{equation*}
\begin{split}
\textrm{minimize} & \qquad \sum_{ c \in \mathfrak{X}_C, \mathbf{q} \in \mathfrak{X}_\mathbf{Q} }  \frac{ \left( \mu_{c,\mathbf{q}}-\mu_{c,\mathbf{q}}' \right)^{2} }{\sigma^{2}_{c,\mathbf{q}}} \\
\textrm{subject to} & \qquad \mathit{DE}_{d}(c^{+},c^{-})\leq \tau, \quad \mathit{DE}_{d}(c^{-},c^{+})\leq \tau, \\
& \qquad \mathit{DE}_{i}(c^{+},c^{-})\leq \tau, \quad \mathit{DE}_{i}(c^{-},c^{+})\leq \tau.
\end{split}
\end{equation*}

After obtaining the modified score distribution by solving the quadratic programming problem, we reconstruct a fair ranking as follows. Consider the individuals with the same profile $c,\mathbf{q}$, i.e., $\forall i, c_{i}=c, \mathbf{q}_{i}=\mathbf{q}$. For each individual $i$, the new score $s_{i}'$ is regenerated from the new CG distribution $\mathcal{N}(\mu'_{c,\mathbf{q}}, \sigma_{c,\mathbf{q}}^{2})$ at the same percentile as the score $s_{i}$ in the original distribution. Specifically, since $s_{i}=\mu_{c,\mathbf{q}}+\rho \sigma_{c,\mathbf{q}}$ and $s_{i}'=\mu_{c,\mathbf{q}}'+\rho \sigma_{c,\mathbf{q}}$ where $\rho$ is the value from the standard normal distribution for the percentile, we have $s_{i}' = s_{i}+(\mu_{c,\mathbf{q}}'-\mu_{c,\mathbf{q}})$. Finally, we re-rank all individuals according to the descending order of their new scores. Since the new scores contain no discrimination, so does the new rank. The procedure is shown in Algorithm \ref{alg:fr}, referred to as \emph{FRank}.

\begin{algorithm2e}
	\SetAlgoVlined
	\SetKwInOut{Input}{Input}
	\SetKwInOut{Output}{Output}
	
	\Input{Ranked dataset $\mathcal{D}$, protected attribute $C$, user-defined parameter $\tau$.}
	\Output{Modified dataset $\mathcal{D}^{*}$.}
	\If{$\textit{PSE-DD}(\mathcal{D},C,\tau)==[false, false]$} {
		\Return{}\;
	}
	Obtain the modified distributions of $S$ by solving the quadratic programming problem\;
	\ForEach{$c,\mathbf{q}$} {
		\ForEach{$i:c_{i}=c,\mathbf{q}_{i}=\mathbf{q}$} {
			$s_{i}' = s_{i}+(\mu_{c,\mathbf{q}}'-\mu_{c,\mathbf{q}})$\;
		}
	}
	Compute the new rank of each individual according to the descending order of $S$, and replace the rank in $\mathcal{D}$ with the new one to obtain $\mathcal{D}^{*}$\;
	\Return{$\mathcal{D}^{*}$}\;
	\caption{\emph{FRank}}
	\label{alg:fr}
\end{algorithm2e}

The computational complexity of our discovery and removal algorithms depends on how efficiently to derive the score $S$ using  Bradley-Terry model. \cite{Wua} proved that the likelihood function is convex and the optimal solution can be efficiently obtained using gradient descent. The complexity also depends on the complexities of building the causal graph and computing the path-specific effect.
Many researches have been devoted to improving the performance of network construction \cite{kalisch2007estimating,tsamardinos2003scaling,aliferis2010local} and probabilistic inference in causal graphs \cite{heckerman1994new,heckerman1996causal}.
The complexity analysis can be found in these related literature.

\section{Experiments}
\label{sec:experiment}

\subsection{Experimental Setup}
For evaluating the proposed methods, we use a real world dataset, the German Credit \cite{Lichman2013}, which is also used in previous works \cite{Yang2017,Zehlike}.
The German Credit dataset consists of 1000 individuals with 20 attributes applying for loans. Due to the small sample size, we only select 8 attributes in our experiments including \textit{age}, \textit{dependant}, \textit{duration}, \textit{housing}, \textit{job}, \textit{property}, \textit{purpose}, \textit{residence}. We treat \textit{age} as the protected attribute, \textit{housing} as the redlining attribute.
We employ the weighted-sum ranking strategy proposed in \cite{Yang2017,Zehlike} to generate two ranked datasets, denoted by \textbf{D1} and \textbf{D2}. The weighted sum is computed through a weighted linear summation of certain attributes, and then all candidates are ranked according to the weighted sum. In \textbf{D1}, all attributes are summed up with equal weight, while in \textbf{D2}, the summation is for all attributes except \textit{age}.
We also use another ranked data \textbf{D} where the ranking is directly based on an original attribute \textit{credit amount}.
After that, we derive the continuous qualification scores from each ranked dataset using the Bradley-Terry model.
The causal graphs are then constructed using the open source software TETRAD \cite{tetrad} and parameterized as described in Section \ref{sec:modeling:building}.
The constructed causal graph for \textbf{D} is shown in Figure \ref{fig:graph}. The causal graphs for the other two datasets are not shown due to space constraints. The quadratic programming is solved using CVXOPT \cite{Dahl2006}. The discrimination threshold $\tau$ is set as 0.05 for both direct and indirect discrimination.

For comparison we involve the statistical parity-based discrimination discovery and removal algorithms proposed by Yang et al. \cite{Yang2017} and Zehlike et al. \cite{Zehlike}. For discrimination discovery, Yang et al. \cite{Yang2017} proposed three set-based discrimination measures called $\mathbf{rRD}$, $\mathbf{rND}$, and $\mathbf{rKL}$ to compute the difference between the protected group and the whole dataset in terms of \emph{risk difference}, \emph{risk ratio}, and \emph{Kullback-Leibler distance}. They compute the values of difference at several discrete points (\eg top-10, top-20, $\cdots$) and sum up all values with the logarithmic discounts. All measures are normalized to 0-1 range (0 is the most fair value and 1 is the least fair value). Since they don't provide any criterion for discrimination discovery, we simply use $0.05$ as the threshold for all three measures.  Zehlike et al. \cite{Zehlike} proposed an adjusted fairness condition (\emph{FairCon}) that requires the minimum number of protected candidates in every prefix of the ranking list. For discrimination removal, Yang et al. \cite{Yang2017} proposed a fair data generator (\emph{FairGen}) that manipulates the permutation according to the user-defined preference $f$. For example, if $f=0.05$, all the candidates are well mixed in equal proportion at every prefix; if $f=1$, the candidates from the protected group are ranked at the bottom. Zehlike et al. \cite{Zehlike} proposed discrimination removal methods, \emph{FA*IR}, to select the most qualified candidate from the corresponding group at every prefix in order to satisfy the adjusted fairness condition.

To evaluate the data utility of all removal approaches,  we adopt two widely used metrics, the Spearman's footrule distance (\emph{SFD}) and the Kendall's tau distance (\emph{KTD}) \cite{Kumar2010}. The Spearman's footrule distance (\emph{SFD}) measures the total element-wise displacement between the modified permutation and the original one.  The Kendall's tau distance (\emph{KTD}) measures the total number of pairwise inversions between the two permutations. For both of the distance metrics, the larger values indicate more data utility loss.

\subsection{Discrimination Discovery}

\begin{figure}
	\centering
	\includegraphics[width=0.7\linewidth]{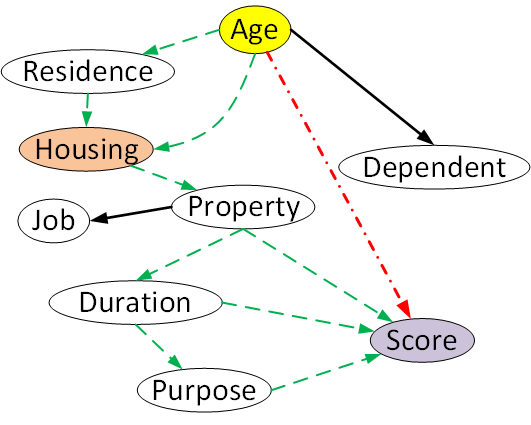}
	\caption{The causal graph of \textbf{D}. The yellow node Age is the protected attribute, the orange node Housing is the redline attribute, and the purple node Score is the decision attribute. The red dash-dot line captures the direct discrimination from Age to Score, and the green dashed line captures the indirect discrimination throgh Housing.}
	\label{fig:graph}
\end{figure}

We quantify the strength of direct and indirect discrimination using our method \emph{FDetect} for all three ranked datasets. The results are shown in Table \ref{tab:discovery}. For dataset \textbf{D1}, all attributes including the protected attribute are used for ranking directly. Thus, the ground-truth is that both direct and indirect discrimination occurs in this dataset. Our method obtains $\mathit{DE}_{d}(c^{+},c^{-}) = 0.231$ and $\mathit{DE}_{i}(c^{+},c^{-}) = 0.055$, showing that both direct and indirect discrimination are correctly identified. For dataset \textbf{D2}, since we use all the other attributes except the protected attribute in the ranking process, the ground-truth is that the indirect discrimination occurs but the direct discrimination doesn't. Our method shows $\mathit{DE}_{d}(c^{+},c^{-}) = 0.026$ and $\mathit{DE}_{i}(c^{+},c^{-}) = 0.061$, which is also consistent with the ground-truth. For dataset \textbf{D}, we don't have the ground-truth. Our method obtains that $\mathit{DE}_{d}(c^{+},c^{-}) = 0.005$ and $\mathit{DE}_{i}(c^{+},c^{-}) = 0.013$. The results imply that neither direct discrimination nor indirect discrimination exists in this dataset.

The statistical parity-based methods $\mathbf{rRD}, \mathbf{rND}, \mathbf{rKL}$ and \emph{FairCon} cannot distinguish direct and indirect discrimination. We directly report the results produced by these methods as shown in Table \ref{tab:discovery}. For \textbf{D1}, the method proposed by Yang et al. shows that $\mathbf{rRD} = 0.204$, $\mathbf{rND} = 0.440$, and $\mathbf{rKL} = 0.590$, while Zehlike's \emph{FairCon} shows that the third position doesn't satisfy the minimum fair requirement. For \textbf{D2}, Yang's method shows that $\mathbf{rRD} = 0.068$, $\mathbf{rND} = 0.225$, and $\mathbf{rKL} = 0.346$, while \emph{FairCon} reports that the 5-th position doesn't satisfy the fair requirement. All methods conclude discrimination for both dataset, which kind of match our conclusions. However, for \textbf{D}, Yang's method shows that $\mathbf{rRD} = 0.008$, $\mathbf{rND} = 0.070$, and $\mathbf{rKL} = 0.109$, where three values make the contradictory conclusions: there is no discrimination according to $\mathbf{rRD}$ but $\mathbf{rND}$ and $\mathbf{rKL}$ report significant discrimination. \emph{FairCon} shows that the ranking cannot satisfy the fair requirement at the 20-th position. All methods cannot obtain the results that are consistent with ours, implying that they may produce incorrect or misleading conclusions.

\setlength{\tabcolsep}{3.5pt}

\begin{table}[ht]
	\centering
	\caption{Comparison of discrimination discovery methods on the direct discrimination. The second column represents the ground-truth for direct and indirect discrimination. }
	\label{tab:discovery}
	\begin{tabular}{|c?c?c|c?c?c|c|c?}
		\hline
		               & \textit{Truth} & $\mathit{DE}_{d}$ & $\mathit{DE}_{i}$ & \textit{FairCon} & \textbf{rRD} & \textbf{rND} & \textbf{rKL} \\ \hline
		 \textbf{D1}   & Y/Y            & 0.231             & 0.055             &        3rd         &    0.204     & 0.44         &     0.59     \\ \hline
		 \textbf{D2}   & N/Y            & 0.026             & 0.061             &        5th         &    0.068     & 0.225        &    0.346     \\ \hline
		  \textbf{D}   & -              & 0.005             & 0.013             &        20th        &    0.008     & 0.07         &    0.109     \\ \hline
	\end{tabular}
\end{table}

\subsection{Discrimination Removal}

We perform \emph{FRank} to remove discrimination and reconstruct fairly ranked datasets with neither direct nor indirect discrimination. Our theoretical results guarantee that there is no discrimination after modification. For comparison, we also execute \emph{FairGen} \cite{Yang2017} and \emph{FA*IR} \cite{Zehlike}. After removing discrimination, we further apply \emph{FDetect} to evaluate whether the newly-generated data achieves truly discrimination-free. The results of three removal methods are shown in Table \ref{tab:comparison}. As can be seen, our method \emph{FRank} removes both direct and indirect discrimination precisely. However, \emph{FairGen} and \emph{FA*IR} cannot achieve discrimination-free. \emph{FairGen} removes neither direct nor indirect discrimination. It even introduces more discrimination to \textbf{D}. \emph{FA*IR} can mitigate part of direct discrimination, but fails to remove indirect discrimination.

\setlength{\tabcolsep}{5pt}

\begin{table}[ht]
	\centering
	\caption{Discrimination and data utility measured on the new ranked data produced by \emph{FairGen}, \emph{FA*IR}, and our \emph{FRank}. Values
		violating the discrimination criterion are marked in bold.}
	\label{tab:comparison}
	\begin{tabular}{|c|c|r|r|r|r|}
		\hline
		             Data               &    Methods     & $\mathit{DE}_{d}$ & $\mathit{DE}_{i}$ & \emph{KTD} & \emph{SFD} \\ \hline
		 \multirow{3}{*}{\textbf{D1}}   &  \emph{FRank}  &           $0.050$ &           $0.050$ &    $24602$ &    $72938$ \\ \cline{2-6}
		                                & \emph{FairGen} &  $\textbf{0.234}$ &  $\textbf{0.064}$ &    $11150$ &    $44600$ \\ \cline{2-6}
		                                &  \emph{FA*IR}  &  $\textbf{0.077}$ &  $\textbf{0.066}$ &    $13882$ &    $55528$ \\ \hline
		 \multirow{3}{*}{\textbf{D2}}   &  \emph{FRank}  &           $0.029$ &           $0.050$ &     $5851$ &    $18090$ \\ \cline{2-6}
		                                & \emph{FairGen} &  $\textbf{0.246}$ &  $\textbf{0.060}$ &    $19483$ &    $77934$ \\ \cline{2-6}
		                                &  \emph{FA*IR}  &           $0.022$ &  $\textbf{0.061}$ &      $231$ &      $924$ \\ \hline
		  \multirow{3}{*}{\textbf{D}}   &  \emph{FRank}  &           $0.005$ &           $0.013$ &        $0$ &        $0$ \\ \cline{2-6}
		                                & \emph{FairGen} &  $\textbf{0.250}$ &           $0.012$ &    $20806$ &    $83226$ \\ \cline{2-6}
		                                &  \emph{FA*IR}  &           $0.003$ &           $0.013$ &      $143$ &      $572$ \\ \hline
	\end{tabular}
\end{table}

We adopt the Spearman's footrule distance (\emph{SFD}) and the Kendall's tau distance (\emph{KTD}) to evaluate the data utility loss when mitigating the discrimination. As can be seen from the last two columns of Table \ref{tab:comparison}, our method \emph{FRank} incurs relatively small data utility loss, but \emph{FairGen} suffers large data utility loss while not achieving discrimination-free. Although \emph{FA*IR} introduces quite a small data utility loss, it fails to mitigate indirect discrimination. It is worth pointing out that there is no direct or indirec discrimination in $\textbf{D}$ so our \emph{FRank} doesn't result in any distortion. On the  contrary, \emph{FairGen} leads to too much utility loss.

We also examine how the data utility varies with different values of the discrimination threshold $\tau$. We perform \emph{FRank} on \textbf{D1} and vary the threshold $\tau$ for \emph{FRank} from $0.00$ to $0.25$ for evaluating how much data utility loss is incurred. In Table~\ref{tab:result}, we can see that both the Spearman's footrule distance (\emph{SFD}) and the Kendall's tau distance (\emph{KTD}) decrease with the increase of $\tau$, which means that less utility loss is incurred with a larger threshold. This observation is consistent with our analysis since the larger $\tau$, the more relaxed the constraints in \emph{FRank}.

\setlength{\tabcolsep}{3.5pt}

\begin{table}[ht]
	\centering
	\caption{Comparison of \emph{FRank} with varied $\tau$.}
	\label{tab:result}
	\begin{tabular}{|c|r|r|r|r|r|r|}
		\hline
		$\tau$                         & $0.00$   & $0.05$  & $0.10$  & $0.15$  & $0.20$  & $0.25$  \\ \hline
		$\mathit{DE}_{d}(c^{+},c^{-})$ & $0.000$  & $0.050$ & $0.100$ & $0.150$ & $0.200$ & $0.231$ \\ \hline
		$\mathit{DE}_{i}(c^{+},c^{-})$ & $0.000$  & $0.050$ & $0.055$ & $0.055$ & $0.055$ & $0.055$ \\ \hline
		\emph{SFD}                     & $43490$  & $24602$ & $14370$ & $9041$  & $3444$  & $0$     \\ \hline
		\emph{KTD}                     & $123626$ & $72938$ & $45636$ & $28652$ & $11054$ & $0$     \\ \hline
	\end{tabular}
\end{table}

\section{Related Work}
\label{sec:relatedwork}

Fairness-aware learning is an active research area in machine learning and data mining. Many methods have been proposed for constructing discrimination-free machine learning models, which either based on data preprocessing or model tweaking. Data preprocessing methods \cite{kamiran2009classifying,feldman2015certifying,Zhang2017c,calmon2017optimized,nabi2017fair} modify the historical training data to remove discrimination before it is used for learning a model. Model tweaking methods \cite{calders2010three,kamishima2011fairness,kamishima2012fairness,zafar2017fairness} require some tweak or adjustment of the constructed machine learning models.
Fair ranking is an emerging topic in fairness-aware learning. Current works in fair ranking are mainly based on the statistical parity. In \cite{Zehlike}, it is required that a preset proportion of protected individuals that must be maintained in each prefix of the ranking for the rank to be fair. However, many works (e.g., \cite{Zhang2017c}) have shown that statistical parity alone is insufficient as a general notion of fairness.

Recently, several studies have been devoted to analyzing discrimination from the causal perspective \cite{Zhang2016,Zhang2017a,Zhang2017b,Zhang2017c,Zhang2017d,Zhang2018a,bonchi2017exposing,nabi2017fair}. In \cite{bonchi2017exposing}, the authors proposed a framework based on the Suppes-Bayes causal network and developed several random-walk-based methods to detect different types of discrimination. However, it is unclear how the number of random walks is related to practical discrimination metrics. In addition, the construction of the Suppes-Bayes causal network is impractical with the large number of attribute-value pairs. In this work we adopt the causal graph used in \cite{Zhang2016,Zhang2017a,Zhang2017b,Zhang2017c,Zhang2017d,Zhang2018a,nabi2017fair}. A causal graph is a probabilistic graph model widely used for causation representation, reasoning and inference \cite{Pearl2009a}.
The limitation of these works is that they focus on the classification problems and cannot be applied directly to the fair ranking problem. This is because in their models, the decision of each individual is treated as an independent random variable, but the ranking positions of different individuals are correlated. In this paper we address above limitations and develop the causal-based fair ranking algorithms.

In data science, it is well-studied in data mining how to model a ranking using a continuous score space \cite{Marden1996}. Several models, such as the Plackett-Luce model \cite{Plackett1975,Luce1959}, the Mallows model \cite{Mallows1957} and the Bradley-Terry model \cite{Bradley1952}, are widely used in this field. In this work, we adopt the Bradley-Terry model to characterize the ranked data and obtain the continuous scores from the ranks.

\section{Conclusions and Future Work}
\label{sec:conclusions}

In this paper, we studied the problem of discovering discrimination in a rank and reconstructing a fair rank if discrimination is detected. We made use of the causal graph to capture the biases in the rank as the causal effect. To address the limitation of the existing single-datatype causal graph, we modeled the ranking positions using a continuous score, and built the causal graph for the profile attributes as well as the score. Then, we extended the path-specific effect technique to the mixed-variable causal graph, which is used to quantitatively measure direct and indirect discrimination in the ranked data. We also theoretically analyzed the relationship between the path-specific effects for the ranked data and those for the binary decision. Based on that, we developed an algorithm for discovering both direct and indirect discrimination, as well as an algorithm to reconstruct a fair rank from the causal graph. The experiments using the German Credit dataset showed that our methods correctly measure the discrimination in the rank and reconstruct a rank that does not contain either direct or indirect discrimination, while the statistical parity-based method may obtain incorrect and misleading results.

In Theorem \ref{thm:de&ie} we assume that the $\pi_i$-specific effect is identifiable from the data. In some cases, the $\pi_i$-specific effect is not able to be computed from the data due to the inherent unidentifiability of the path-specific effect \cite{Avin2005}. In our another work \cite{Zhang2018a}, we have discussed how to deal with this situation, referred to as the unidentifiable situation, and developed a lower and upper bound to the unidentifiable path-specific effect. Similar ideas can be adopted to deal with unidentifiable situation for ranked data. We leave this part for future work.

{\noindent \bf Repeatability:} Our software together with the datasets used in this paper are available at http://tiny.cc/fair-ranking.

\begin{acks}
This work was supported in part by NSF 1646654. 
\end{acks}

\bibliographystyle{ACM-Reference-Format}
\bibliography{lib,ijcai18}

\end{document}